\documentclass[11pt,oneside,reqno]{amsart}

\usepackage[T1]{fontenc}
\usepackage{amssymb}
\usepackage{algpseudocode}
\usepackage{algorithm}
\usepackage{verbatim}
\usepackage{graphicx}
\usepackage{placeins}
\usepackage{listings}
\usepackage{xcolor}
\usepackage{subcaption}
\usepackage[noadjust]{cite}
\RequirePackage{dsfont} 
 \scrollmode
\allowdisplaybreaks
\usepackage{graphicx}
\usepackage{epstopdf}
\usepackage[hidelinks]{hyperref} 
\usepackage{mathtools}
\usepackage{float}

\usepackage{amsmath}
\usepackage{tikz}
\usepackage{enumerate}
\usepackage{xcolor}


\definecolor{codegreen}{rgb}{0,0.6,0}
\definecolor{codegray}{rgb}{0.5,0.5,0.5}
\definecolor{codepurple}{rgb}{0.58,0,0.82}
\definecolor{backcolour}{rgb}{0.95,0.95,0.92}

\lstdefinestyle{list_style}{
  backgroundcolor=\color{backcolour}, commentstyle=\color{codegreen},
  keywordstyle=\color{magenta},
  numberstyle=\tiny\color{codegray},
  stringstyle=\color{codepurple},
  basicstyle=\ttfamily\footnotesize,
  breakatwhitespace=false,         
  breaklines=true,                 
  captionpos=b,                    
  keepspaces=true,                 
  numbers=left,                    
  numbersep=5pt,                  
  showspaces=false,                
  showstringspaces=false,
  showtabs=false,                  
  tabsize=2
}

\newcommand{\xdasharrow}[2][->]{
\tikz[baseline=-\the\dimexpr\fontdimen22\textfont2\relax]{
\node[anchor=south,font=\scriptsize, inner ysep=1.5pt,outer xsep=2.2pt](x){#2};
\draw[shorten <=3.4pt,shorten >=3.4pt,dashed,#1](x.south west)--(x.south east);
}
}

\newcommand{\DEBUG}{}

\ifdefined\DEBUG%

  \def\rem#1{{\marginpar{\raggedright\scriptsize #1}}}
  \newcommand{\pmr}[1]{\rem{\color{blue}{$\bullet$ #1}}}
  \newcommand{\ppr}[1]{\rem{\color{red}{$\bullet$ #1}}}
 \else%

  \newcommand{\ppr}[1]{}
  \newcommand{\pmr}[1]{}
 \fi

\newcommand{\E}{{\mathbb E}}

\def\rho{\varrho_1}

\theoremstyle{plain}

\newtheorem{lemma}{Lemma}

\newtheorem{proposition}{Proposition}

\DeclareMathOperator{\Pbb}{\mathbb{P}}

\theoremstyle{definition}
\newtheorem{remark}{Remark}

\begin{document}

\title[Neural Network Estimation of Time-Dependent AR($p$) Parameters]{Neural Network-Based Estimation of Time-Dependent Parameters in AR($p$) Processes}

\author[A. Kopeć]{Agnieszka Kopeć}
\address{AGH University of Krakow,
Faculty of Applied Mathematics,
 Al. A.~Mickiewicza 30, 30-059 Krak\'ow, Poland}
\email{agkopec@agh.edu.pl, corresponding author}

\author[P. Przyby\l owicz]{Pawe{\l } Przyby\l owicz}
 \address{AGH University of Krakow,
Faculty of Applied Mathematics,
 Al. A.~Mickiewicza 30, 30-059 Krak\'ow, Poland}
 \email{pprzybyl@agh.edu.pl}

\author[M. Wi\k{a}cek]{Martyna Wi\k{a}cek}
 \address{AGH University of Krakow,
Faculty of Applied Mathematics,
 Al. A.~Mickiewicza 30, 30-059 Krak\'ow, Poland}
\email{martynawiacek@agh.edu.pl}

\begin{abstract}
We investigate a forecasting framework based on a simple discrete-time dynamic model with coefficients varying in time. The parameters of the model are recovered within a deep learning framework, which makes it possible to retain a transparent parametric structure while simultaneously accounting for complex and nonstationary patterns in the observed phenomenon.

Our analysis covers two specifications of the noise process. Besides the standard Gaussian setting, we also consider Laplace-distributed noise, which can offer a more adequate description in the presence of heavier tails and sharper local fluctuations. For both cases, we formulate the predictive scheme of the model and analyze the associated uncertainty quantification, including the construction of prediction intervals.

The results illustrate that a relatively simple model, when combined 
with time-dependent parameter estimation, can serve as a mathematically tractable and practically flexible tool for forecasting complex dynamics under different noise assumptions. The general model is stated for TVAR($p$), while the prediction-interval formulas and the numerical experiments are developed for the TVAR(1) case.
\newline
\newline
\textbf{Key words:} artificial neural networks, time-varying 
parameters, AR(p) process, quasi-likelihood function, 
nonstationary time series, parameter estimation
\newline
\newline
\textbf{MSC 2020:} 62M10, 68T07, 62F10, 60G25
\end{abstract}
\maketitle
\section{Introduction}
Time series models with time-varying parameters constitute a fundamental tool for modeling nonstationary phenomena encountered in economics, finance, energy markets, and environmental applications. Classical autoregressive models assume constant coefficients and stationary noise, which often proves inadequate in the presence of structural changes, evolving volatility, or regime-dependent dynamics. To overcome these limitations, time-varying autoregressive (TVAR) models have been extensively studied in the literature.

A classical and well-established approach to TVAR modeling is based on state-space formulations, in which the autoregressive coefficients are treated as latent stochastic processes, typically evolving according to random walk or autoregressive dynamics. This framework underlies dynamic linear models and Bayesian filtering techniques, such as the Kalman filter and its extensions, and provides a coherent probabilistic interpretation together with recursive estimation procedures; see, for example, \cite{PradoWest,DurbinKoopman}. While these methods are theoretically sound and interpretable, they rely on relatively simple assumptions on the temporal evolution of parameters, which may limit their ability to capture nonlinear or irregular time variation.


An alternative line of research models time-varying parameters as smooth deterministic functions of time. These functions can be estimated using penalized likelihood methods, spline-based techniques, or other smoothness-inducing regularization approaches \cite{KitagawaGersch}. Related frameworks include locally stationary models, which assume that the underlying process evolves gradually over time \cite{Dahlhaus}. Although these methods offer greater flexibility than purely stochastic parameter evolution, they still require the smoothness structure or functional form to be specified a priori.

In parallel, the rapid development of machine learning methods has led to widespread use of neural networks for time series forecasting, including hybrid ARIMA--ANN approaches that combine linear autoregressive structure with nonlinear neural-network components \cite{Zhang2003}. Deep learning models have demonstrated strong empirical performance, particularly in large-scale and highly nonlinear settings. Recent probabilistic forecasting approaches combine neural networks with likelihood-based objectives in order to learn conditional distributions of future observations rather than point predictions \cite{DSM, DeepAR,TranProb}. However, such methods are typically formulated as black-box predictors and do not preserve the explicit autoregressive structure of classical time series models, which limits interpretability and hinders direct statistical inference on model parameters.

More recently, hybrid approaches have emerged that combine stochastic modeling with neural networks, using deep learning as a flexible function approximator for unknown or time-dependent model components. In this paradigm, neural networks are not used to replace the underlying stochastic model, but rather to learn its evolving parameters from data. This idea has been successfully applied to Markov models and stochastic differential equations with time-dependent coefficients, where likelihood-based training allows for statistically principled parameter estimation \cite{sdes_nn}. Such approaches retain the interpretability of classical models while benefiting from the expressive power of neural networks.

A closely related contribution is the DeepTVAR model, where an LSTM network is used to generate time-varying autoregressive coefficient matrices and innovation covariance matrices in a VAR($p$) framework \cite{DeepTVAR}. DeepTVAR also enforces the causality condition on the autoregressive coefficients through the Ansley--Kohn transform and is applied to energy price forecasting. Compared with DeepTVAR, the present work considers a less general, univariate AR($p$) setting, but it differs in several important aspects. First, we use a feedforward neural network rather than a recurrent LSTM architecture. Second, our model explicitly includes a~time-dependent intercept, or trend component, $c(t)$, together with time-varying autoregressive coefficients and a time-varying noise scale. Third, we study both Gaussian and Laplace innovation specifications and derive prediction intervals for the time-varying AR(1) case.

Another related approach is the TVPANN-GARCH model of \cite{TVPANNGARCH}, where artificial neural networks are used to model time-varying parameters in a GARCH-type volatility equation. This work is conceptually close to ours in its use of neural networks as flexible approximators of parameter dynamics, but its statistical object is different: it focuses on conditional volatility modeling, whereas our framework concerns autoregressive dynamics with time-dependent mean and noise parameters.

Motivated by these developments, we focus on the estimation of time-dependent parameters in autoregressive time series models. In contrast to classical state-space TVAR formulations, we model the intercept, autoregressive coefficient, and noise scale as deterministic but unknown functions of time, approximated by a feedforward neural network. Parameter estimation is carried out by minimizing the negative log-likelihood corresponding to the assumed noise distribution, thereby embedding the neural network directly into a likelihood-based inference framework. As a result, the learned network outputs can be interpreted as time-varying autoregressive parameters, which preserves interpretability while allowing the model to capture complex nonstationary behavior.

In addition to the standard Gaussian assumption, we explicitly consider Laplace-distributed noise in order to account for heavier tails and increased robustness to atypical observations. Heavy-tailed noise models are well known to be important in financial and energy time series, where extreme observations occur frequently \cite{KotzLaplace,WeronReview}. By allowing for alternative likelihoods within the same modeling framework, we enable a direct comparison between Gaussian and non-Gaussian specifications and assess their impact on parameter estimation and forecasting performance.

A further contribution of this work is the derivation of recursive formulas for point forecasts and multi-step-ahead prediction intervals in the time-varying AR(1) case, treated as a special case of the general TVAR($p$) framework. Conditional on the estimated parameter trajectories, we provide explicit expressions for forecast means and variances, leading to computationally efficient probabilistic forecasts. Unlike generic neural network forecasting models, our approach preserves the autoregressive structure and allows uncertainty quantification to be carried out in a manner analogous to classical time series analysis.

The proposed methodology is illustrated through numerical experiments on both synthetic and real-world datasets, including electricity spot prices. The experiments are intended as illustrative evidence that the neural-network-based estimation can recover time-varying parameters in controlled settings and produce interpretable forecasts in nonstationary time series. Overall, the paper contributes to the growing literature on interpretable, likelihood-based neural network models for stochastic time series with time-dependent structure.

\section{Model specification and loss function} \label{Section: model and loss}

\subsection{\texorpdfstring{TVAR($p$) model with time-varying coefficients}
{TVAR(p) model with time-varying coefficients}}

We consider a time-varying autoregressive process of order $p$, defined by
\begin{equation}
    y_t = c(t) + \sum_{j=1}^{p} \Phi_j(t)\, y_{t-j} + \varepsilon_t,
    \qquad t \ge p,\quad p \ge 1,
    \label{eq:tvarp_model}
\end{equation}
with initial values
\[
y_0, y_1, \ldots, y_{p-1} \in \mathbb{R}.
\]

The functions
\[
c(\cdot),\qquad \Phi_1(\cdot),\ldots,\Phi_p(\cdot)
\]
are assumed to be unknown and potentially nonlinear. Depending on the assumed distribution of the noise process, the model also involves an additional time-dependent scale parameter, denoted by $\sigma^2(t)$ in the Gaussian case and by $b(t)$ in the Laplace case.

Our goal is to estimate these time-dependent components based on the observed sample
\[
\{\tilde y_0,\tilde y_1,\tilde y_2,\ldots,\tilde y_N\}.
\]

\subsection{\texorpdfstring{Discrete Markov process of order $p \ge 1$}{Discrete Markov process of order p >= 1}}

Let $(Y_t)_{t \in \mathbb{Z}}$ be a discrete Markov process of order $p$. Then, for every $t \ge p$ and $A \in \mathcal{B}(\mathbb{R})$, we have
\[
P(Y_t \in A \mid Y_{t-1}, Y_{t-2}, \ldots, Y_{t-p}, \ldots, Y_0)
=
P(Y_t \in A \mid Y_{t-1}, \ldots, Y_{t-p}).
\]

Assuming that the corresponding conditional densities exist, we obtain
the density of the non-initial part of the trajectory,
\[
f_{(Y_p, \ldots, Y_N)\mid (Y_0,\ldots,Y_{p-1})}
(y_p, \ldots, y_N \mid y_0,\ldots,y_{p-1})
=
\prod_{t=p}^{N}
f_{Y_t \mid (Y_{t-1}, \ldots, Y_{t-p})}
\bigl( y_t \mid y_{t-1}, \ldots, y_{t-p} \bigr).
\]
The variables $Y_0,\ldots,Y_{p-1}$ are treated as fixed initial values in this conditional likelihood.

Model~\eqref{eq:tvarp_model} defines a discrete Markov process of order $p$.Thus, conditional on the block $(y_{t-1},\ldots,y_{t-p})$, the distribution of $y_t$ is fully determined by the time-dependent coefficients and the distribution of the noise term.

\subsection{Gaussian specification and loss function}

Assume that the noise sequence $(\varepsilon_t)$ consists of independent random variables such that
\[
\varepsilon_t \sim \mathcal{N}(0,\sigma^2(t)),
\qquad t \ge p,
\]
where $\sigma^2(t)>0$ for all $t\ge p$.

Then
\[
y_{t+p} \mid y_{t+p-1}, \ldots, y_t
\sim
\mathcal{N}\!\left(
c(t+p) + \sum_{j=1}^{p} \Phi_j(t+p)\, y_{t+p-j},
\ \sigma^2(t+p)
\right),
\]
and the corresponding conditional density is
\begin{equation*}
\begin{split}
    &f_{\, y_{t+p} \mid (y_{t+p-1}, \ldots, y_t)}
    \bigl(x \mid y_{t+p-1}, \ldots, y_t\bigr) \\
    &=
    \frac{1}{\sqrt{2\pi}\,\sigma(t+p)}
    \exp\!\left(
    -
    \frac{
    \Bigl(
    x
    - c(t+p)
    - \sum_{j=1}^{p} \Phi_j(t+p)\, y_{t+p-j}
    \Bigr)^2
    }{
    2\,\sigma^2(t+p)
    }
    \right).
\end{split}
\end{equation*}

Hence, the joint density of $(y_p,\ldots,y_N)$ conditional on $(y_0,\ldots,y_{p-1})$ is given by
\[
f_{(y_p,\ldots,y_N)\mid (y_0,\ldots,y_{p-1})}
(y_p,\ldots,y_N \mid y_0,\ldots,y_{p-1})
=
\prod_{t=p}^{N}
f_{\, y_{t} \mid (y_{t-1}, \ldots, y_{t-p})}
\bigl( y_{t} \mid y_{t-1}, \ldots, y_{t-p} \bigr).
\]
This leads to the negative log-likelihood function
\[
\mathcal{L}_G(c,\Phi,\sigma^2)
=
\sum_{t=p}^{N}
\left[
\frac{1}{2}\ln\!\bigl(2\pi\,\sigma^2(t)\bigr)
+
\frac{
\Bigl(
\tilde y_{t}
-
c(t)
-
\sum_{j=1}^{p} \Phi_j(t)\,\tilde y_{t-j}
\Bigr)^2
}{
2\,\sigma^2(t)
}
\right].
\]

For Gaussian innovations with a fixed or known scale, minimizing this conditional negative log-likelihood reduces to a conditional least-squares criterion, a classical estimation approach for stochastic processes \cite{KlimkoNelson1978}.

We denote the time-varying parameters by
\[
\Theta_G(t)=\bigl[c(t),\,\Phi_1(t),\,\ldots,\,\Phi_p(t),\,\sigma^2(t)\bigr].
\]
Following the approach proposed in \cite{sdes_nn}, we approximate the function $\Theta_G(t)$ using a neural network with weights $w \in \mathbb{R}^M$. The corresponding empirical loss function is obtained by substituting the network outputs into $\mathcal{L}_G$.

\subsection{Laplace specification and loss function}

Assume that the noise sequence $(\varepsilon_t)$ consists of independent random variables such that
\[
\varepsilon_t \sim \mathrm{Laplace}(0,b(t)),
\qquad t \ge p,
\]
where $b(t)>0$ for all $t\ge p$.

Then
\[
y_{t+p} \mid y_{t+p-1}, \ldots, y_t
\sim
\mathrm{Laplace}\!\left(
c(t+p) + \sum_{j=1}^{p} \Phi_j(t+p)\, y_{t+p-j},
\ b(t+p)
\right),
\]
and the corresponding conditional density is
\begin{equation*}
\begin{split}
&f_{\, y_{t+p} \mid (y_{t+p-1}, \ldots, y_t)}
\bigl(x \mid y_{t+p-1}, \ldots, y_t\bigr) \\
&=
\frac{1}{2\,b(t+p)}
\exp\!\left(
-
\frac{
\left|
x
- c(t+p)
- \sum_{j=1}^{p} \Phi_j(t+p)\, y_{t+p-j}
\right|
}{ b(t+p) }
\right).
\end{split}
\end{equation*}

Hence, the joint density of $(y_p,\ldots,y_N)$ conditional on $(y_0,\ldots,y_{p-1})$ is given by
\[
f_{(y_p,\ldots,y_N)\mid (y_0,\ldots,y_{p-1})}
(y_p,\ldots,y_N \mid y_0,\ldots,y_{p-1})
=
\prod_{t=p}^{N}
f_{\, y_{t} \mid (y_{t-1}, \ldots, y_{t-p})}
\bigl( y_{t} \mid y_{t-1}, \ldots, y_{t-p} \bigr).
\]

This leads to the negative log-likelihood function
\[
\mathcal{L}_L(c,\Phi,b)
=
\sum_{t=p}^{N}
\left[
\ln\!\bigl(2\,b(t)\bigr)
+
\frac{
\left|
\tilde y_{t}
-
c(t)
-
\sum_{j=1}^{p} \Phi_j(t)\,\tilde y_{t-j}
\right|
}{
b(t)
}
\right].
\]

We denote the time-varying parameters by
\[
\Theta_L(t)=\bigl[c(t),\,\Phi_1(t),\,\ldots,\,\Phi_p(t),\,b(t)\bigr].
\]
Again, these parameter functions are approximated by a feedforward neural network with weights $w \in \mathbb{R}^M$. The corresponding empirical loss function is obtained by substituting the network outputs into $\mathcal{L}_L$.

\subsection{\texorpdfstring{AR(1) model}
{AR(1) model}}

For $p=1$, model~\eqref{eq:tvarp_model} reduces to the 
process
\begin{equation*}
    y_t = c(t) + \Phi(t) y_{t-1} + \varepsilon_t,
    \qquad t \in \{1,2,3,\ldots\},
    \label{eq:model} 
\end{equation*}
with the initial condition $y_0 \in \mathbb{R}$.

In the Gaussian case, we assume that
\[
\varepsilon_t \sim \mathcal{N}(0,\sigma^2(t)),
\qquad t\ge 1.
\]
Then
\[
y_t \mid y_{t-1}
\sim
\mathcal{N}\bigl(c(t)+\Phi(t)y_{t-1},\,\sigma^2(t)\bigr),
\]
and the corresponding negative log-likelihood function is
\begin{equation}
\begin{split}
    \mathcal{L}(c, \Phi, \sigma^2)
    &=
    \sum_{t=0}^{N-1}
    \left[
        \frac{1}{2}\ln\bigl(2\pi \sigma^2(t+1)\bigr)
        +
        \frac{\bigl(\tilde y_{t+1}-c(t+1)-\Phi(t+1)\tilde y_t\bigr)^2}{2\sigma^2(t+1)}
    \right].
    \label{eq:negloglik}
\end{split}
\end{equation}

We denote the time-varying parameters by
\[
\Theta(t):=\bigl[c(t),\,\Phi(t),\,\sigma^2(t)\bigr],
\]
and approximate this function by a neural network:
\[
\hat{\Theta}(t; w):=\bigl[\hat{c}(t; w),\,\hat{\Phi}(t; w),\,\hat{\sigma}^2(t; w)\bigr].
\]
Substituting these approximations into Equation~\eqref{eq:negloglik} yields the empirical loss function
\begin{equation*}
    \hat{\mathcal{L}}(w)
    =
    \sum_{t=0}^{N-1}
    \left[
        \frac{1}{2}\ln\bigl(2\pi \hat{\sigma}^2(t+1; w)\bigr)
        +
        \frac{\left(\tilde{y}_{t+1} - \hat{c}(t+1; w) - \hat{\Phi}(t+1; w)\tilde{y}_t\right)^2}{2 \hat{\sigma}^2(t+1; w)}
    \right].
\end{equation*}

In the Laplace case, we assume that
\[
\varepsilon_t \sim \mathrm{Laplace}(0,b(t)),
\qquad t\ge 1.
\]
Then
\[
y_t \mid y_{t-1}
\sim
\mathrm{Laplace}\bigl(c(t) + \Phi(t)y_{t-1},\, b(t)\bigr),
\]
and the corresponding negative log-likelihood function is
\[
\mathcal{L}(c,\Phi,b)
=
\sum_{t=0}^{N-1}
\left[
\ln\bigl(2b(t+1)\bigr)
+
\frac{|\tilde y_{t+1} - c(t+1) - \Phi(t+1)\tilde y_t|}{b(t+1)}
\right].
\]

In this case, we denote
\[
\Theta(t) = \bigl[c(t),\,\Phi(t),\,b(t)\bigr],
\]
and approximate this function by a neural network:
\[
\hat{\Theta}(t;w)=\bigl[\hat c(t;w),\,\hat\Phi(t;w),\,\hat b(t;w)\bigr].
\]
Substituting the network outputs into the likelihood yields the empirical loss
\[
\hat{\mathcal{L}}(w)
=
\sum_{t=0}^{N-1}
\left[
\ln\bigl(2\hat b(t+1;w)\bigr)
+
\frac{
|\tilde y_{t+1} - \hat c(t+1;w) - \hat\Phi(t+1;w)\tilde y_t|
}{
\hat b(t+1;w)
}
\right].
\]

\subsection{Normal--Laplace model}

As an additional extension, one may consider the case where the noise term follows the Normal--Laplace distribution
\[
\varepsilon_t \sim \mathrm{NL}\bigl(0,\sigma^2(t),b(t)\bigr).
\]
A random variable $X$ has the Normal--Laplace distribution if
\[
X = W + Z,
\]
where $W$ and $Z$ are independent random variables such that
\[
W \sim \mathcal{N}(\mu,\sigma^2),
\qquad
Z \sim \mathrm{Laplace}(0,b).
\]

Thus, both the Gaussian and Laplace distributions may be viewed as special or limiting cases of this specification. Since the corresponding density is more involved, in the present paper we restrict further analysis to the Gaussian and Laplace cases.

\section{Predictions and prediction intervals in TVAR(1) models}

In this section, we study point predictions and prediction intervals for the TVAR(1) model, treated as a special case of the general framework introduced in Section~2. We assume that the time-dependent parameters have already been estimated based on the observed sample
\[
y_0,y_1,\ldots,y_N,
\]
and we write the corresponding estimates as
\[
\hat c(t), \qquad \hat \Phi(t),
\]
together with the estimated scale parameter, denoted by $\hat\sigma(t)$ in the Gaussian case and by $\hat b(t)$ in the Laplace case. 


The prediction intervals derived in this section are conditional on the information available at the forecasting origin. Since the estimated parameter trajectories are obtained from this information, the intervals account only for the variability induced by future innovations and do not include the additional uncertainty arising from parameter estimation.


\subsection{Prediction intervals for the TVAR(1) model with Gaussian noise}

We consider the model
\[
y_t = c(t) + \Phi(t) y_{t-1} + \sigma(t)\eta_t,
\qquad t \ge 1,
\]
where the sequence $(\eta_t)_{t\ge 1}$ consists of independent and identically distributed random variables such that
\[
\eta_t \sim \mathcal N(0,1).
\]

Given the estimated parameter functions, for $t\ge N+1$ we define the predictive scheme
\[
\hat y_t = \hat c(t) + \hat \Phi(t)\hat y_{t-1} + \hat \sigma(t)\eta_t,
\qquad \hat y_N := y_N.
\]
Moreover, we define a filtration
\[
    \mathcal{G}_0 \coloneqq \{\varnothing, \Omega\},
\]
and for \( t \geq 1 \),
\[
    \mathcal{G}_t \coloneqq \sigma(\eta_1, \ldots, \eta_t),
\]
where \(\sigma(\eta_1, \ldots, \eta_t)\) denotes the \(\sigma\)-algebra generated by the random variables \(\eta_1, \ldots, \eta_t\).
Note that for all \( t \geq N \),
\[
    \sigma(\hat{y}_t) \subset \mathcal{G}_t.
\]
We assume that all estimated parameter functions are 
measurable with respect to the $\mathcal{G}_N$.
For $k\ge 1$, we introduce the notation
\[
\bar y_{N+k} := \mathbb E(\hat y_{N+k}\mid \mathcal G_N).
\]
Since $\eta_{N+k}$ is independent of $\mathcal G_N$ and has zero mean, 
we obtain recursively
\[
\bar y_{N+1} = \hat c(N+1) + \hat \Phi(N+1)y_N,
\]
and, for $k\ge 2$,
\[
\bar y_{N+k}
=
\hat c(N+k) + \hat \Phi(N+k)\bar y_{N+k-1}.
\]

Let
\[
e_{N+k} := \hat y_{N+k} - \bar y_{N+k}
\]
denote the prediction error. Then
\[
e_{N+1} = \hat \sigma(N+1)\eta_{N+1},
\]
and for $k\ge 2$,
\[
e_{N+k}
=
\hat \Phi(N+k)e_{N+k-1} + \hat \sigma(N+k)\eta_{N+k}.
\]

Hence, conditionally on $\mathcal G_N$, the random variable $e_{N+k}$ is Gaussian with zero mean. Its conditional variance can be computed recursively as
\[
s_{N+1}^2 := \operatorname{Var}(e_{N+1}\mid \mathcal G_N)
= \hat \sigma^2(N+1),
\]
and, for $k\ge 2$,
\[
s_{N+k}^2
:=
\operatorname{Var}(e_{N+k}\mid \mathcal G_N)
=
\hat \Phi^2(N+k)s_{N+k-1}^2 + \hat \sigma^2(N+k).
\]

Therefore,
\[
e_{N+k}\mid \mathcal G_N \sim \mathcal N(0,s_{N+k}^2),
\]
and for a confidence level $\alpha\in(0,1)$ we obtain
\[
\mathbb P\!\left(
\left.
\frac{|e_{N+k}|}{s_{N+k}} \le q_\alpha^{(G)}
\,\right|\, \mathcal G_N
\right)=\alpha,
\]
where \(q_\alpha^{(G)}\) denotes the \(\alpha\)-quantile of \(|Z|\) for \(Z\sim\mathcal N(0,1)\), that is, the \(\alpha\)-quantile of the standard half-normal distribution.

Thus, the \(k\)-step-ahead prediction interval at level \(\alpha\) is
\[
\bar P_{N+k}^{(G)}(\alpha)
:=
\left[
\bar y_{N+k} - q_\alpha^{(G)} s_{N+k},
\;
\bar y_{N+k} + q_\alpha^{(G)} s_{N+k}
\right].
\]

\subsection{Prediction intervals for the TVAR(1) model with Laplace noise}

We recall a~basic property of the Laplace distribution that will be used throughout this subsection.
\begin{remark}[Scaling property of the Laplace distribution]
If $X \sim \mathrm{Laplace}(0,1)$, then for any $\beta \in \mathbb{R}$ and
$\alpha > 0$,
\[
\alpha X + \beta \sim \mathrm{Laplace}(\beta, \alpha).
\]
\end{remark}

We now consider the model
\[
y_t = c(t) + \Phi(t)y_{t-1} + b(t)\xi_t,
\qquad t \ge 1,
\]
where the sequence $(\xi_t)_{t\ge 1}$ consists of independent and identically distributed random variables such that
\[
\xi_t \sim \mathrm{Laplace}(0,1).
\]
Given the estimated parameter functions, for $t\ge N+1$ we define
\[
\hat y_t = \hat c(t) + \hat \Phi(t)\hat y_{t-1} + \hat b(t)\xi_t,
\qquad \hat y_N := y_N.
\]
As before, we define a filtration
\[
    \mathcal{F}_0 \coloneqq \{\varnothing, \Omega\},
\]
and for \( t \geq 1 \),
\[
    \mathcal{F}_t \coloneqq \sigma(\xi_1, \ldots, \xi_t).
\]
Then for all \( t \geq N \),
\[
    \sigma(\hat{y}_t) \subset \mathcal{F}_t.
\]
As in the Gaussian case, we assume that all estimated parameter functions 
are measurable with respect to the $\mathcal{F}_N$.
We introduce the notation
\[
\bar y_{N+k} := \mathbb E(\hat y_{N+k}\mid \mathcal F_N),
\qquad k\ge 1.
\]
Since $\xi_{N+k}$ is independent of $\mathcal F_N$ and has zero mean,
we obtain
\[
\bar y_{N+1} = \hat c(N+1) + \hat \Phi(N+1)y_N,
\]
and, for $k\ge 2$,
\[
\bar y_{N+k}
=
\hat c(N+k) + \hat \Phi(N+k)\bar y_{N+k-1}.
\]

\subsection*{One-step-ahead prediction}

Let
\[
e_{N+1}:=\hat y_{N+1}-\bar y_{N+1}.
\]
Then
\[
e_{N+1}=\hat b(N+1)\xi_{N+1},
\]
and therefore
\[
e_{N+1}\mid \mathcal F_N \sim \mathrm{Laplace}(0,\hat b(N+1)).
\]

We use the following elementary fact.

\begin{lemma}\label{lem:laplace_abs}
Let \(X\sim \mathrm{Laplace}(0,s)\) with \(s>0\). Then, for every \(q\ge 0\),
\[
\mathbb P(|X|\le q)=1-e^{-q/s}.
\]
\end{lemma}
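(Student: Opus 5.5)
The plan is to compute the probability directly from the Laplace density. The standard Laplace$(0,s)$ density is $f(x)=\frac{1}{2s}e^{-|x|/s}$, the same form used for the conditional densities in Section~\ref{Section: model and loss}. For $q\ge 0$ we write
\[
\mathbb P(|X|\le q)=\int_{-q}^{q}\frac{1}{2s}e^{-|x|/s}\,dx .
\]

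The integrand is even, so this integral equals $2\int_0^q \frac{1}{2s}e^{-x/s}\,dx=\int_0^q \frac1s e^{-x/s}\,dx$. The antiderivative is $-e^{-x/s}$, and evaluating it on $[0,q]$ gives $1-e^{-q/s}$. The case $q=0$ is consistent, since both sides are zero: the distribution has no atoms.

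An alternative route makes the structure clearer. By the scaling property in the preceding Remark, $X\overset{d}{=}sY$ with $Y\sim\mathrm{Laplace}(0,1)$, so it suffices to treat $s=1$. One then observes that $|Y|$ is standard exponential: for $u\ge0$, symmetry gives $\mathbb P(|Y|>u)=2\cdot\frac12 e^{-u}=e^{-u}$. From this, $\mathbb P(|X|\le q)=\mathbb P(|Y|\le q/s)=1-e^{-q/s}$.

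There is no genuine obstacle here. The only point needing care is fixing the parametrization of $\mathrm{Laplace}(0,s)$ with density $\frac{1}{2s}e^{-|x|/s}$, so that the scale $s$ matches the $b(t)$ in the likelihood. With this, the formula is immediate, and the quantile used afterwards is $q=-s\ln(1-\alpha)$.
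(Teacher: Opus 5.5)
Your proposal is correct and is essentially the paper's argument: the paper evaluates $\mathbb P(|X|\le q)=F(q)-F(-q)$ with the piecewise Laplace CDF, while you integrate the density $\frac{1}{2s}e^{-|x|/s}$ over $[-q,q]$ using evenness, which is the same elementary computation. Your alternative route, which reduces to $s=1$ by scaling and then uses that $|Y|$ is standard exponential, is also valid.
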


\begin{proof}
Let \(F\) denote the cumulative distribution function of \(X\). Then
\[
\mathbb P(|X|\le q)=\mathbb P(-q\le X\le q)=F(q)-F(-q).
\]
For \(X\sim \mathrm{Laplace}(0,s)\),
\[
F(x)=
\begin{cases}
\frac12 e^{x/s}, & x\le 0,\\[0.5ex]
1-\frac12 e^{-x/s}, & x>0.
\end{cases}
\]
Hence,
\[
\mathbb P(|X|\le q)
=
\left(1-\frac12 e^{-q/s}\right)-\frac12 e^{-q/s}
=
1-e^{-q/s}.
\]
\end{proof}

Applying Lemma~\ref{lem:laplace_abs}, we obtain
\[
\mathbb P\!\left(
\left.
\frac{|e_{N+1}|}{\hat b(N+1)} \le q_\alpha^{(L)}
\,\right|\, \mathcal F_N
\right)=\alpha
\]
if and only if
\[
1-e^{-q_\alpha^{(L)}}=\alpha.
\]
Thus
\[
q_\alpha^{(L)}=-\ln(1-\alpha)=\ln\!\left(\frac{1}{1-\alpha}\right),
\]
and the one-step-ahead prediction interval at level \(\alpha\) is
\[
\bar P_{N+1}^{(L)}(\alpha)
:=
\left[
\bar y_{N+1}
-
\hat b(N+1)\ln\!\left(\frac{1}{1-\alpha}\right),
\;
\bar y_{N+1}
+
\hat b(N+1)\ln\!\left(\frac{1}{1-\alpha}\right)
\right].
\]

\subsection*{Two-step-ahead prediction}

We now consider the two-step-ahead conditional expectation
\[
\bar y_{N+2}:=\mathbb E(\hat y_{N+2}\mid \mathcal F_N).
\]
Using the model structure, we obtain
\[
\bar y_{N+2}
=
\hat c(N+2)+\hat \Phi(N+2)\bar y_{N+1}.
\]

The corresponding prediction error is
\[
e_{N+2}
:=
\hat y_{N+2}-\bar y_{N+2}
=
\hat \Phi(N+2)e_{N+1}+\hat b(N+2)\xi_{N+2}.
\]
Since
\[
e_{N+1}=\hat b(N+1)\xi_{N+1},
\]
it follows that
\[
e_{N+2}
=
\hat \Phi(N+2)\hat b(N+1)\xi_{N+1}
+
\hat b(N+2)\xi_{N+2}.
\]

Set
\[
a:=|\hat \Phi(N+2)|\,\hat b(N+1),
\qquad
c:=\hat b(N+2).
\]
Then, conditionally on \(\mathcal F_N\),
\[
e_{N+2}\overset{d}=X_1+X_2,
\]
where
\[
X_1\sim \mathrm{Laplace}(0,a),
\qquad
X_2\sim \mathrm{Laplace}(0,c),
\]
and \(X_1\) and \(X_2\) are independent.

If \(a\neq c\), the density of \(e_{N+2}\) is given by
\[
f_{e_{N+2}}(x)
=
\frac{a\,e^{-|x|/a}-c\,e^{-|x|/c}}{2(a^2-c^2)},
\qquad x\in\mathbb R.
\]
Consequently, for \(q\ge 0\),
\[
\mathbb P(|e_{N+2}|\le q\mid \mathcal F_N)
=
1-
\frac{a^2e^{-q/a}-c^2e^{-q/c}}{a^2-c^2}.
\]
Therefore, the radius \(q_\alpha^{(2)}\) of the two-step-ahead 
prediction interval is determined by the equation
\[
a^2e^{-q_\alpha^{(2)}/a}
-
c^2e^{-q_\alpha^{(2)}/c}
=
(1-\alpha)(a^2-c^2).
\]

If \(a=c\), then
\[
f_{e_{N+2}}(x)
=
\frac{1}{4a}e^{-|x|/a}\left(1+\frac{|x|}{a}\right),
\qquad x\in\mathbb R,
\]
and, for \(q\ge 0\),
\[
\mathbb P(|e_{N+2}|\le q\mid \mathcal F_N)
=
1-e^{-q/a}\left(1+\frac{q}{2a}\right).
\]
Hence, in this case \(q_\alpha^{(2)}\) is determined by
\[
e^{-q_\alpha^{(2)}/a}\left(1+\frac{q_\alpha^{(2)}}{2a}\right)
=
1-\alpha.
\]

The detailed derivation of the density of \(e_{N+2}\), together with the corresponding distribution function and the equation determining the quantile \(q_\alpha^{(2)}\), is given in Appendix~A.

Thus, the two-step-ahead prediction interval at level \(\alpha\) is
\[
\bar P_{N+2}^{(L)}(\alpha)
:=
\left[
\bar y_{N+2}-q_\alpha^{(2)},
\;
\bar y_{N+2}+q_\alpha^{(2)}
\right].
\]

In contrast to the Gaussian case, the quantity \(q_\alpha^{(2)}\) is not given by a universal quantile depending only on \(\alpha\). It depends on the estimated parameters through \(a=|\hat \Phi(N+2)|\hat b(N+1)\) and \(c=\hat b(N+2)\), and is obtained as the unique solution of the corresponding scalar equation above.

\subsection*{Multi-step-ahead prediction}


Finally, we define the $k$-step prediction error 
\begin{equation*}
  e_{N+j} := \widehat y_{N+j}-\overline y_{N+j},\qquad j\in\{0, 1, \dots, k\}.
  \label{eq:error_def}
\end{equation*}
We recall the plug-in predictive path defined by
\begin{equation}
\begin{aligned}
  \widehat y_N &:= y_N,\\
  \widehat y_{N+j} &:= \widehat c(N+j)
  + \widehat\Phi(N+j)\,\widehat y_{N+j-1}
  + \widehat b(N+j)\,\xi_{N+j},
  \qquad j\in\{1, \dots, k\}.
\end{aligned}
\label{eq:plugin_path}
\end{equation}
and the associated conditional mean (“noise-free”) prediction given by
\begin{equation}
\begin{aligned}
  \overline y_N &:= y_N,\\
  \overline y_{N+j} &:= \E\big[\widehat y_{N+j}\mid\mathcal{F}_N\big]
  = \widehat c(N+j) + \widehat\Phi(N+j)\,\overline y_{N+j-1},
  \qquad j\in\{1, \dots, k\}.
\end{aligned}
\label{eq:mean_path}
\end{equation}

\begin{proposition}[Exact conditional law of the $k$-step prediction error under Laplace innovations]
\label{prop:laplace_kstep}
Assume the foregoing set-up. Then, conditional on $\mathcal{F}_N$:
\begin{enumerate}
\item
\label{item:1}
The error process satisfies the recursion
\begin{equation}
  e_{N} = 0,\qquad e_{N+j}=\widehat\Phi(N+j)\,e_{N+j-1}+\widehat b(N+j)\,\xi_{N+j},\qquad j\in\{1, \dots, k\},
  \label{eq:error_recursion}
\end{equation}
and admits the explicit expansion
\begin{equation}
  e_{N+k}=\sum_{j=1}^k a_{j,k}\,\xi_{N+j},
  \qquad
  a_{j,k}:=\widehat b(N+j)\prod_{m=j+1}^k \widehat\Phi(N+m),
  \label{eq:error_expansion}
\end{equation}
with the convention that an empty product equals~$1$.

\item
\label{item:2}
The conditional characteristic function of $e_{N+k}$ is
\begin{equation}
  \varphi_{k}(t)
  :=\E\big[e^{it e_{N+k}}\mid\mathcal{F}_N\big]
  =\prod_{j=1}^k \frac{1}{1+a_{j,k}^2 t^2},\qquad t\in\mathbb{R}.
  \label{eq:cf_product}
\end{equation}

\item 
\label{item:3}
Let $s_{j,k}:=|a_{j,k}|$ for $j\in\{1, \dots, k\}$ and assume that $s_{1,k},\dots,s_{k,k}$ are pairwise distinct and strictly positive.
Define
\begin{equation}
  A_{j,k}:=\prod_{\ell\in\{1,\dots,k\}\setminus\{j\}}\frac{s_{j,k}^2}{s_{j,k}^2-s_{\ell,k}^2}.
  \label{eq:Ajk}
\end{equation}
Then $e_{N+k}$ has a Lebesgue density given by the finite sum
\begin{equation}
  f_{k}(x)
  :=\frac{\mathrm{d}}{\mathrm{d}x}\Pbb(e_{N+k}\le x\mid\mathcal{F}_N)
  =\frac12\sum_{j=1}^k \frac{A_{j,k}}{s_{j,k}}\exp\Big(-\frac{|x|}{s_{j,k}}\Big),\qquad x\in\mathbb{R}.
  \label{eq:pdf_sumexp}
\end{equation}
In particular, the conditional distribution is symmetric about~$0$ and for $x\ge 0$,
\begin{equation}
  F_k(x):=\Pbb(e_{N+k}\le x\mid\mathcal{F}_N)
  =1-\frac12\sum_{j=1}^k A_{j,k}\exp\Big(-\frac{x}{s_{j,k}}\Big).
  \label{eq:cdf_xge0}
\end{equation}

\item 
\label{item:4}
For any confidence level $\alpha\in(0,1)$ there exists a unique $q_{k,\alpha}>0$ such that
\begin{equation}
  \Pbb\big(|e_{N+k}|\le q_{k,\alpha}\mid\mathcal{F}_N\big)=\alpha.
  \label{eq:q_def}
\end{equation}
Under the distinct-scales assumption from item~\ref{item:3}, $q_{k,\alpha}$ is the unique solution of
\begin{equation}
  \sum_{j=1}^k A_{j,k}\exp\Big(-\frac{q_{k,\alpha}}{s_{j,k}}\Big)=1-\alpha.
  \label{eq:q_equation}
\end{equation}
Consequently, a central $\alpha$-level prediction interval for $y_{N+k}$ is
\begin{equation}
  \big[\overline y_{N+k}-q_{k,\alpha},\ \overline y_{N+k}+q_{k,\alpha}\big].
  \label{eq:central_PI}
\end{equation}
\end{enumerate}
\end{proposition}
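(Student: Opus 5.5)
The plan is to prove the four items in order, with each one feeding the next. Throughout, I work conditionally on $\mathcal{F}_N$. Since all the estimated parameter functions are $\mathcal{F}_N$-measurable and $(\xi_{N+1},\dots,\xi_{N+k})$ is independent of $\mathcal{F}_N$, the coefficients $a_{j,k}$ may be treated as constants. The $\xi_{N+j}$ keep their unconditional i.i.d. $\mathrm{Laplace}(0,1)$ law.

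\emph{Item~\ref{item:1}.} Subtract \eqref{eq:mean_path} from \eqref{eq:plugin_path}. The intercepts $\widehat c(N+j)$ cancel, which gives \eqref{eq:error_recursion}, with $e_N=y_N-y_N=0$. The expansion \eqref{eq:error_expansion} then follows by induction on $k$. The case $k=1$ is immediate. For the inductive step,
\[
e_{N+k}=\widehat\Phi(N+k)\sum_{j=1}^{k-1}a_{j,k-1}\xi_{N+j}+\widehat b(N+k)\xi_{N+k},
\]
and one checks that $\widehat\Phi(N+k)a_{j,k-1}=a_{j,k}$ and that $a_{k,k}=\widehat b(N+k)$.

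\emph{Item~\ref{item:2}.} The characteristic function of $\mathrm{Laplace}(0,1)$ is $1/(1+t^2)$, which is a direct computation of $\int \tfrac12 e^{-|x|}e^{itx}\,dx$. By the scaling property in the Remark, $a\xi$ has characteristic function $1/(1+a^2t^2)$ for any real $a$; a negative $a$ is harmless because the law is symmetric. Conditional independence then turns the characteristic function of the sum into the product \eqref{eq:cf_product}.

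\emph{Item~\ref{item:3}.} This is the main computational step. I will use a partial fraction decomposition in the variable $u=t^2$. Since the $s_{j,k}^2$ are distinct and positive,
\[
\prod_{j=1}^k\frac{1}{1+s_{j,k}^2u}=\sum_{j=1}^k\frac{A_{j,k}}{1+s_{j,k}^2u}.
\]
The coefficients are found by the cover-up rule: multiply by $1+s_{j,k}^2u$ and evaluate at $u=-1/s_{j,k}^2$. This yields $\prod_{\ell\neq j}\bigl(1-s_{\ell,k}^2/s_{j,k}^2\bigr)^{-1}$, which is exactly \eqref{eq:Ajk}. Each summand $1/(1+s^2t^2)$ is the characteristic function of the density $\frac{1}{2s}e^{-|x|/s}$. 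The right-hand side of \eqref{eq:pdf_sumexp} is an integrable function whose Fourier transform equals $\varphi_k$. Moreover, $\varphi_k$ is integrable for $k\ge1$. By Fourier inversion, \eqref{eq:pdf_sumexp} is therefore the density of $e_{N+k}$, even though some of the $A_{j,k}$ are negative.

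Symmetry follows because $f_k$ is even. Letting $u=0$ in the partial fraction identity gives $\sum_j A_{j,k}=1$. For $x\ge0$, I then compute
\[
F_k(x)=\tfrac12+\int_0^x f_k=\tfrac12+\tfrac12\sum_j A_{j,k}\bigl(1-e^{-x/s_{j,k}}\bigr),
\]
which simplifies to \eqref{eq:cdf_xge0}.

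\emph{Item~\ref{item:4}.} This part is proved without the distinct-scales assumption. The function $G(q)=\Pbb(|e_{N+k}|\le q\mid\mathcal{F}_N)$ is continuous in $q$, because the law is absolutely continuous (for instance, $a_{k,k}\xi_{N+k}$ with $\widehat b>0$ has a density and is independent of the rest). It also satisfies $G(0)=0$ and $G(q)\to1$ as $q\to\infty$. Existence of $q_{k,\alpha}$ follows from the intermediate value theorem.

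For uniqueness I need $G$ to be strictly increasing. Write $e_{N+k}$ as a sum of $a_{k,k}\xi_{N+k}$, which has a strictly positive density on $\mathbb{R}$, and an independent term. Their convolution then has an everywhere positive density, so $G$ is strictly increasing.

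Under the assumption of item~\ref{item:3}, symmetry gives
\[
G(q)=2F_k(q)-1=1-\sum_j A_{j,k}e^{-q/s_{j,k}}.
\]
Setting $G(q)=\alpha$ yields \eqref{eq:q_equation}. Finally, since $\widehat y_{N+k}=\overline y_{N+k}+e_{N+k}$, the interval \eqref{eq:central_PI} has conditional coverage $\alpha$.

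The main obstacle is item~\ref{item:3}: justifying the signed mixture as a genuine density, and keeping track of the signs of the $A_{j,k}$. I will handle this through Fourier uniqueness rather than any probabilistic interpretation of the mixture.
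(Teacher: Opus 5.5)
Your proposal is correct and follows essentially the same route as the paper: subtracting the paths and inducting for item~1, multiplying the Laplace characteristic functions for item~2, and a partial-fraction decomposition in $u=t^2$ via the cover-up rule followed by Fourier inversion for item~3. Item~4 also matches, using strict positivity of the convolved density to get a unique $q_{k,\alpha}$. The differences are minor: you obtain $F_k$ from $\sum_j A_{j,k}=1$ rather than from the survival function, and your positivity argument needs only $a_{k,k}=\widehat b(N+k)>0$, so it remains valid even when some $\widehat\Phi(N+m)$ vanish, a case the paper's argument implicitly excludes.
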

\begin{proof}
We prove the four items in turn. 

\medskip
\noindent Ad \ref{item:1}. From \eqref{eq:plugin_path} and \eqref{eq:mean_path}, we have, for $j\ge 1$,
\begin{align*}
  e_{N+j}
  &= \widehat y_{N+j}-\overline y_{N+j}\\
  &= \Big(\widehat c(N+j) + \widehat\Phi(N+j)\widehat y_{N+j-1} + \widehat b(N+j)\xi_{N+j}\Big)
     -\Big(\widehat c(N+j)+\widehat\Phi(N+j)\overline y_{N+j-1}\Big)\\
  &= \widehat\Phi(N+j)\,\big(\widehat y_{N+j-1}-\overline y_{N+j-1}\big)+\widehat b(N+j)\xi_{N+j}
   = \widehat\Phi(N+j)\,e_{N+j-1}+\widehat b(N+j)\xi_{N+j}.
\end{align*}
Since $e_N=\widehat y_N-\overline y_N=y_N-y_N=0$, this proves \eqref{eq:error_recursion}.

We now show \eqref{eq:error_expansion} by induction on~$k$.
For $k=1$, \eqref{eq:error_recursion} gives $e_{N+1}=\widehat b(N+1)\xi_{N+1}$, which matches \eqref{eq:error_expansion} with $a_{1,1}=\widehat b(N+1)$.
Assume \eqref{eq:error_expansion} holds for a given $k\ge 1$.
Then, by \eqref{eq:error_recursion},
\begin{align*}
  e_{N+(k+1)}
  &= \widehat\Phi(N+k+1)e_{N+k}+\widehat b(N+k+1)\xi_{N+k+1}\\
  &= \widehat\Phi(N+k+1)\sum_{j=1}^{k} a_{j,k}\xi_{N+j}+\widehat b(N+k+1)\xi_{N+k+1}\\
  &= \sum_{j=1}^{k}\Big(a_{j,k}\widehat\Phi(N+k+1)\Big)\xi_{N+j}+\widehat b(N+k+1)\xi_{N+k+1}.
\end{align*}
Comparing with \eqref{eq:error_expansion}, we see that $a_{j,k+1}=a_{j,k}\widehat\Phi(N+k+1)$ for $j\le k$ and $a_{k+1,k+1}=\widehat b(N+k+1)$.
This is exactly the definition
\(a_{j,k+1}=\widehat b(N+j)\prod_{m=j+1}^{k+1}\widehat\Phi(N+m)\).
Thus \eqref{eq:error_expansion} holds for $k+1$, completing the induction.

\medskip
\noindent Ad \ref{item:2}. By \eqref{eq:error_expansion} and the conditional independence of $\xi_{N+1},\dots,\xi_{N+k}$ from $\mathcal{F}_N$ (and from each other),
\begin{align*}
  \varphi_k(t)
  &=\E\Big[\exp\Big(it\sum_{j=1}^k a_{j,k}\xi_{N+j}\Big)\,\Big|\,\mathcal{F}_N\Big]
   =\E\Big[\prod_{j=1}^k \exp\big(it a_{j,k}\xi_{N+j}\big)\,\Big|\,\mathcal{F}_N\Big]\\
  &=\prod_{j=1}^k \E\big[\exp\big(it a_{j,k}\xi_{N+j}\big)\big]
   =\prod_{j=1}^k \frac{1}{1+a_{j,k}^2 t^2},
\end{align*}
where in the last step we used Lemma~\ref{lem:cf_laplace} with $s=|a_{j,k}|$.
This proves~\eqref{eq:cf_product}.

\medskip
\noindent Ad \ref{item:3}. Set $s_{j,k}:=|a_{j,k}|$ and assume $s_{1,k},\dots,s_{k,k}$ are pairwise distinct and positive.
Introduce the auxiliary variable $u:=t^2\ge 0$.
Then \eqref{eq:cf_product} becomes
\begin{equation*}
  \varphi_k(t)=\prod_{j=1}^k \frac{1}{1+s_{j,k}^2 t^2}
  =\prod_{j=1}^k \frac{1}{1+s_{j,k}^2 u}.
  \label{eq:cf_as_R}
\end{equation*}
By Lemma~\ref{lem:partial_fraction} (applied with $s_j=s_{j,k}$), there exist constants $A_{j,k}$ given by \eqref{eq:Ajk} such that
\begin{equation}
  \varphi_k(t)=\sum_{j=1}^k \frac{A_{j,k}}{1+s_{j,k}^2 t^2}.
  \label{eq:cf_pf}
\end{equation}
Since $\varphi_k$ is integrable (indeed, $\varphi_k(t)=O(t^{-2k})$ as $|t|\to\infty$), the conditional density exists and can be obtained by Fourier inversion:
\begin{equation}
  f_k(x)=\frac{1}{2\pi}\int_{-\infty}^{\infty} e^{-itx}\,\varphi_k(t)\,\mathrm{d}t.
  \label{eq:fourier_inversion}
\end{equation}
Insert \eqref{eq:cf_pf} into \eqref{eq:fourier_inversion} and use that the sum is finite to exchange sum and integral:
\begin{align*}
  f_k(x)
  &=\sum_{j=1}^k A_{j,k}\,\frac{1}{2\pi}\int_{-\infty}^{\infty}\frac{e^{-itx}}{1+s_{j,k}^2 t^2}\,\mathrm{d}t
   =\sum_{j=1}^k A_{j,k}\,g_{s_{j,k}}(x),
\end{align*}
where $g_s$ is as in Lemma~\ref{lem:inv_fourier}.
Applying Lemma~\ref{lem:inv_fourier} gives exactly \eqref{eq:pdf_sumexp}.

To obtain the CDF for $x\ge 0$, integrate \eqref{eq:pdf_sumexp} from $-\infty$ to $x$.
Using symmetry of each term $e^{-|x|/s}$, it is convenient to first compute the survival function for $x\ge 0$:
\begin{align*}
  \Pbb(e_{N+k}>x\mid\mathcal{F}_N)
  &=\int_{x}^{\infty} f_k(u)\,\mathrm{d}u
   =\frac12\sum_{j=1}^k \frac{A_{j,k}}{s_{j,k}}\int_{x}^{\infty} e^{-u/s_{j,k}}\,\mathrm{d}u\\
  &=\frac12\sum_{j=1}^k \frac{A_{j,k}}{s_{j,k}}\,\Big[-s_{j,k}e^{-u/s_{j,k}}\Big]_{u=x}^{u=\infty}
   =\frac12\sum_{j=1}^k A_{j,k} e^{-x/s_{j,k}}.
\end{align*}
Therefore, for $x\ge 0$,
\(F_k(x)=1-\Pbb(e_{N+k}>x\mid\mathcal{F}_N)\)
which yields \eqref{eq:cdf_xge0}.
Symmetry follows from~\eqref{eq:error_expansion} since each $\xi_{N+j}$ is symmetric about $0$ and the coefficients $a_{j,k}$ are deterministic given~$\mathcal{F}_N$. 
Alternatively, it is immediate from \eqref{eq:pdf_sumexp},
since \(f_k\) is an even function.

\medskip
\noindent Ad \ref{item:4}. Since each scaled Laplace density is strictly positive everywhere and continuous, the density of a sum of independent scaled Laplace variables is the convolution of strictly positive continuous functions and is therefore itself strictly positive and continuous on $\mathbb{R}$.
Hence the conditional CDF of $e_{N+k}$ is continuous and strictly increasing.
Consequently, for any $\alpha\in(0,1)$ there exists a unique $q_{k,\alpha}>0$ satisfying \eqref{eq:q_def}.

Under the distinct-scales assumption, by symmetry
\begin{align*}
  \Pbb(|e_{N+k}|\le q\mid\mathcal{F}_N)
  &=\Pbb(-q\le e_{N+k}\le q\mid\mathcal{F}_N)
   =F_k(q)-F_k(-q)
   =2F_k(q)-1,
\end{align*}
for all $q\ge 0$.
Thus \eqref{eq:q_def} is equivalent to $F_k(q)=(1+\alpha)/2$.
Using \eqref{eq:cdf_xge0} with $x=q\ge 0$, we obtain
\begin{align*}
  \frac{1+\alpha}{2}
  =F_k(q)
  =1-\frac12\sum_{j=1}^k A_{j,k}e^{-q/s_{j,k}},
\end{align*}
which rearranges to \eqref{eq:q_equation}.
The prediction interval \eqref{eq:central_PI} then follows directly from the definition
$e_{N+k} = \widehat{y}_{N+k}-\overline y_{N+k}$.
\end{proof}

\section{Numerical experiments}

In this section, we present numerical experiments on both synthetic and real data. Our aim is twofold: to verify whether the proposed neural-network-based procedure can recover time-dependent model parameters in a controlled setting and to illustrate its forecasting behavior under different noise assumptions. In the synthetic case, the true parameter functions are known and can therefore be directly compared with their estimates. In the real-data case, we analyze the recovered parameter trajectories and report one-step-ahead and two-step-ahead prediction errors obtained under Gaussian and Laplace noise. The temporal ordering of the observations is preserved throughout all experiments, and no random shuffling is used. The real-data forecasts are intended as an illustrative comparison for the considered splits, rather than as a comprehensive forecasting benchmark. The notebooks used to generate the final experimental figures are collected in the folder \texttt{TVAR\_NOTEBOOKS\_FINAL}
and are available at \url{https://github.com/agkopec/TVAR_NOTEBOOKS_FINAL}.


\subsection{Numerical experiments on synthetic data}

We first consider synthetic data generated from the TVAR(1) model under two noise specifications: Gaussian and Laplace. Since the true parameter functions are known in this setting, these experiments provide a direct benchmark for assessing the accuracy of the proposed neural-network-based estimation procedure. In  both synthetic experiments we use $N=99$ transitions, i.e. observations $y_0,\ldots,y_{99}$, with $y_0=7$ and random seed $42$. For $t=0,\ldots,99$, the deterministic parameter functions are
\[
c(t)=\sin\!\left(\frac{t}{6\pi}\right)+7,
\]
\[
\phi(t)=-0.3\left(0.5\cos\!\left(\frac{t}{4\pi}+30\right)+0.5\right)+0.2,
\]
and
\[
b(t)=\left(0.5\left|0.5+\sin\!\left(\frac{t}{6\pi}\right)\right|+0.5\right)^2.
\]
For the Laplace experiment the innovations satisfy $\varepsilon_t\sim\mathrm{Laplace}(0,b(t))$, whereas for the Gaussian experiment we use $\varepsilon_t\sim\mathcal N(0,\sigma^2(t))$ with
$$\sigma^2(t) = b(t).$$
The same functions $c(t)$ and $\phi(t)$ are used in both noise specifications.

For the Laplace noise specification,
the final architecture and the training hyperparameters were selected by a grid-search procedure with multistart initialization. The search covered the hidden-layer widths and depth, activation function, learning rate, smoothness penalty, and the optional constraint imposed on the autoregressive coefficient ($\lvert \phi \rvert < 0.99$). The final model was selected by minimizing the average raw mean squared reconstruction error over the recovered parameter trajectories. In the final Laplace run, grid search selected a three-layer GELU network with hidden widths $(16,16,16)$.
By contrast, for the Gaussian noise specification, the final neural network architecture and training hyperparameters were tuned manually, as grid search failed to produce a satisfactory estimate of the parameter $\sigma^2$. The final Gaussian run therefore uses a manually selected three-layer GELU network with hidden widths $(16,32,16)$. 

Figure~\ref{fig:synth-data} presents representative synthetic trajectories generated under Gaussian and Laplace noise. Figure~\ref{fig:synth-params} compares the true and estimated parameter functions. In the Gaussian case, we report the recovery of $c(t)$, $\phi(t)$, and $\sigma^2(t)$, while in the Laplace case, we report the recovery of $c(t)$, $\phi(t)$, and $b(t)$.

The neural network architectures 
are summarized in Table~\ref{tab:synth-arch}, and the corresponding reconstruction errors are reported in Table~\ref{tab:synth-metrics}. Two hyperparameters are omitted from Table~\ref{tab:synth-arch}, as the same values were used in both cases ($\phi$ constrained: no, smoothness: $0$). In both cases, the selected model is relatively small, which indicates that accurate recovery of the time-dependent coefficients does not require a~highly overparameterized network in the present synthetic setting.

The numerical results show that the proposed method reconstructs the time-varying autoregressive coefficient very accurately under both noise assumptions. The largest discrepancies are observed for the intercept and, in the Gaussian case, for the variance trajectory. Overall, the recovered parameter curves remain close to the true ones, which confirms that the likelihood-based neural-network approach is capable of learning the time-dependent structure of the model in a stable and interpretable way.

\begin{figure}[!htbp]
    \centering
    \subfloat[Gaussian noise]{
        \includegraphics[width=0.48\linewidth]{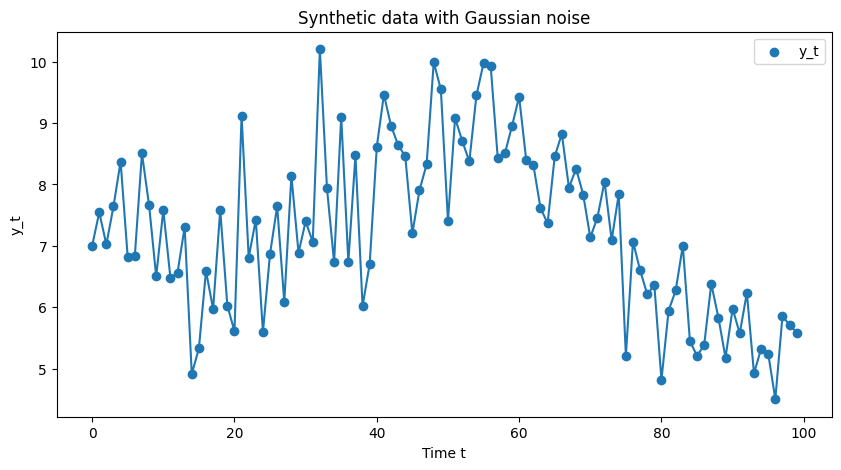}
    }
    \hfill
    \subfloat[Laplace noise]{
        \includegraphics[width=0.48\linewidth]{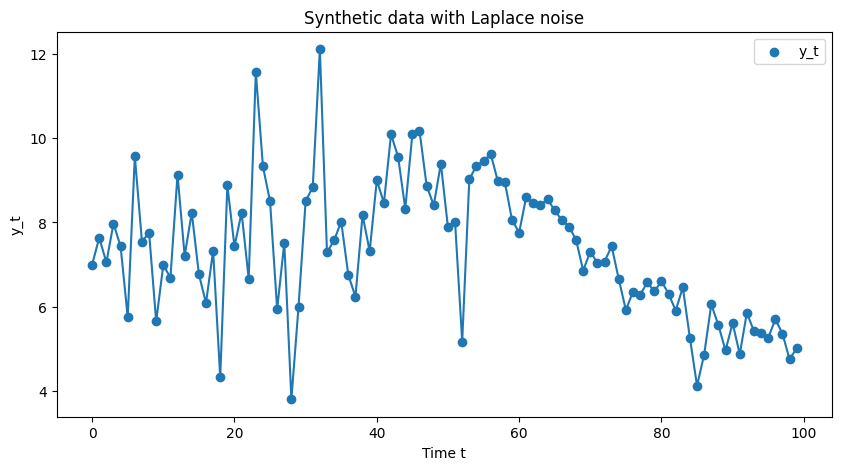}
    }
    \caption{Synthetic trajectories generated from the TVAR(1) model under Gaussian and Laplace noise.}
    \label{fig:synth-data}
\end{figure}

\begin{figure}[tbp]
    \centering
    \subfloat[Gaussian noise -- $c(t)$]{
        \includegraphics[width=0.32\linewidth]{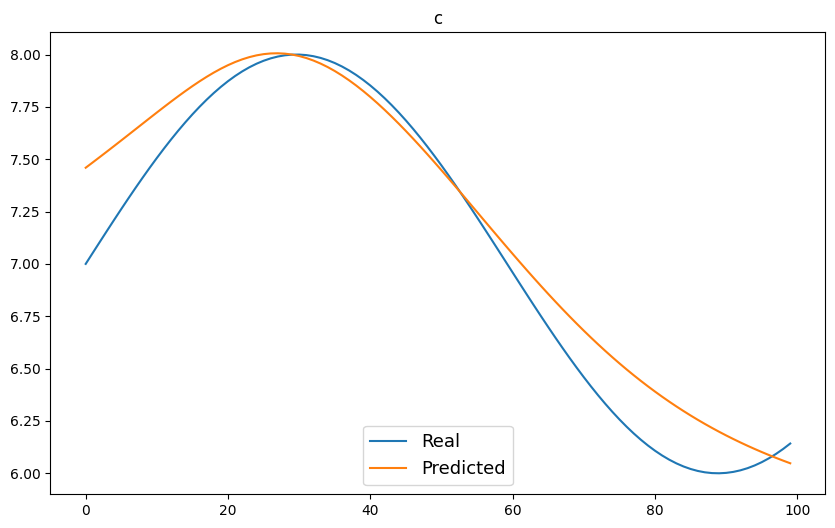}
    }
    \subfloat[Gaussian noise -- $\phi(t)$]{
        \includegraphics[width=0.32\linewidth]{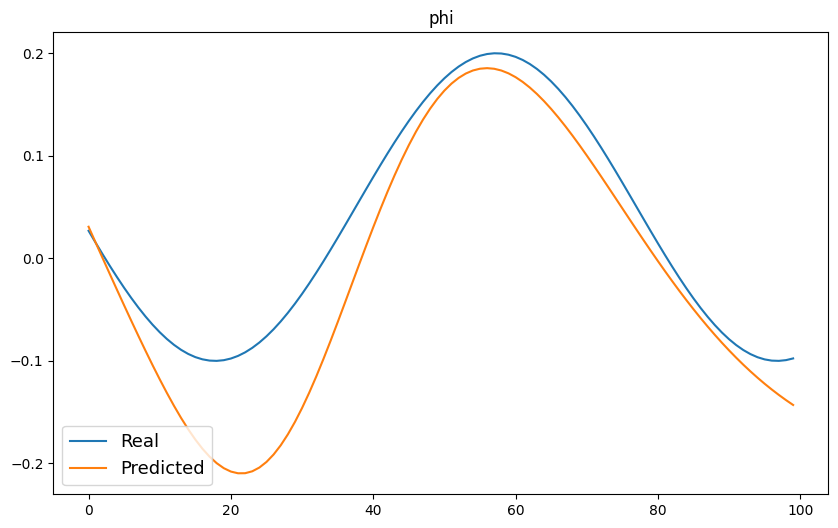}
    }
    \subfloat[Gaussian noise -- $\sigma^2(t)$]{
        \includegraphics[width=0.32\linewidth]{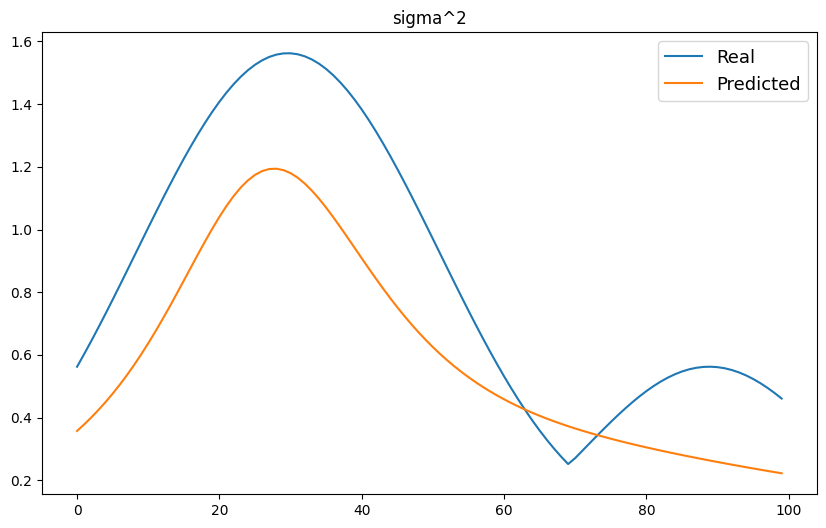}
    }

    \subfloat[Laplace noise -- $c(t)$]{
        \includegraphics[width=0.32\linewidth]{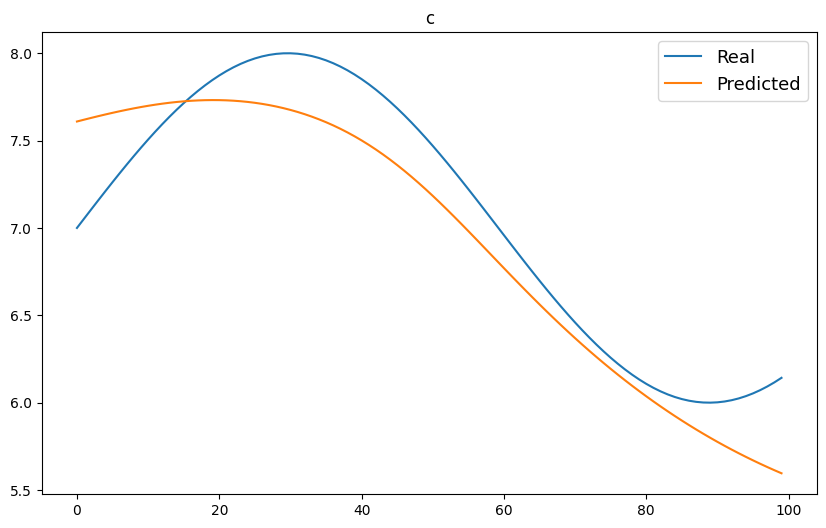}
    }
    \subfloat[Laplace noise -- $\phi(t)$]{
        \includegraphics[width=0.32\linewidth]{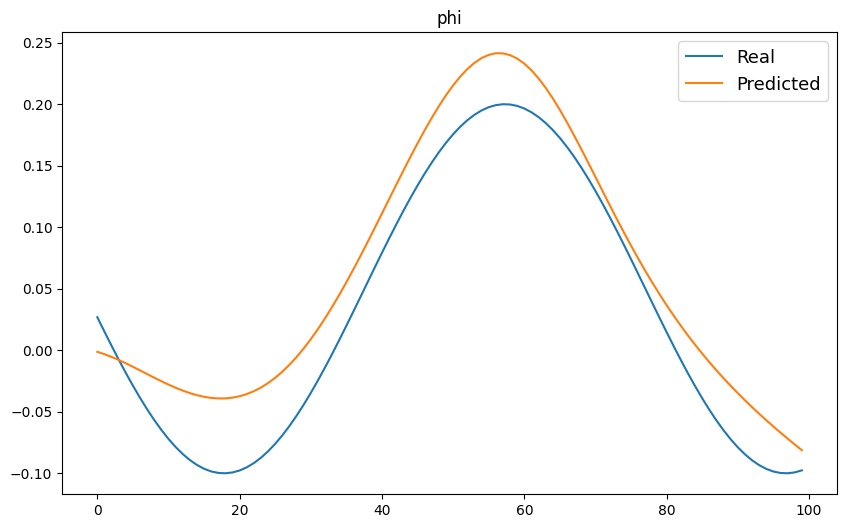}
    }
    \subfloat[Laplace noise -- $b(t)$]{
        \includegraphics[width=0.32\linewidth]{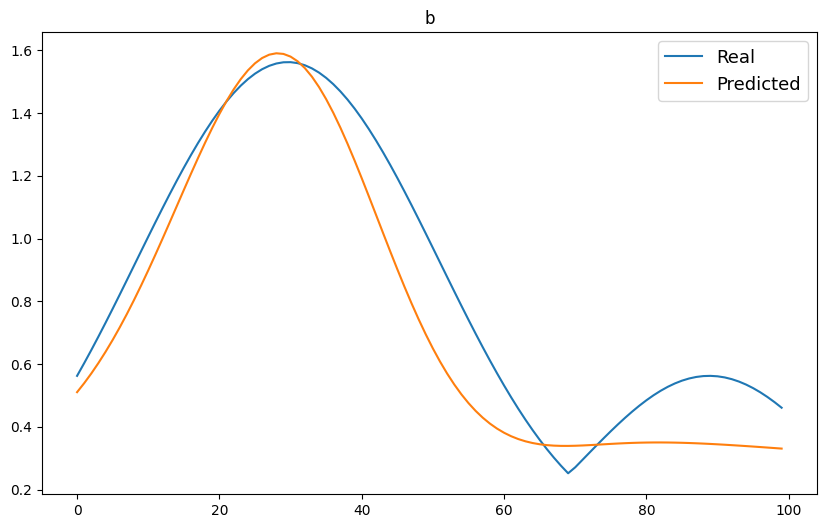}
    }
    \caption{True and estimated time-dependent parameters in the synthetic experiments. Panels (A)--(C) correspond to the Gaussian case, whereas panels (D)--(F) correspond to the Laplace case.}
    \label{fig:synth-params}
\end{figure}



\begin{table}[htbp]
\centering
\begin{tabular}{lcccccc}
\hline
Noise & Hidden layers & Activation & Optimizer & Learning rate & 
Batch size & Epochs \\
\hline
Gaussian & $(16,32,16)$ & GELU & Adam & $10^{-3}$ & $16$ & 2500 \\
Laplace  & $(16,16,16)$ & GELU & AdamW & $3\cdot 10^{-3}$ & $100$ (full) & 2500 \\
\hline
\end{tabular}
\caption{Architectures and training hyperparameters used in the synthetic experiments.}
\label{tab:synth-arch}
\end{table}


\begin{table}[htbp]
\centering
\begin{tabular}{lcccc}
\hline
Noise & $\mathrm{MSE}(c)$ & $\mathrm{MSE}(\phi)$ & $\mathrm{MSE}(\text{scale})$ & $\mathrm{MSE}_{\mathrm{mean}}$ \\
\hline
Gaussian & $0.0320$ & $0.00315$ & $0.0954$ & $0.0435$ \\
Laplace  & $0.0741$ & $0.00142$ & $0.0252$ & $0.0336$ \\
\hline
\end{tabular}
\caption{Parameter reconstruction errors in the synthetic experiments. In the Gaussian case, the scale parameter is $\sigma^2(t)$, while in the Laplace case it is $b(t)$. The quantity $\mathrm{MSE}_{\mathrm{mean}}$ denotes the arithmetic mean of the three parameter-wise MSE values.}
\label{tab:synth-metrics}
\end{table}


Table~\ref{tab:synth-arch} presents the architectures used for both noise specifications. In both cases, relatively small feedforward networks with three hidden layers and GELU activation were employed. The main differences lie in the hidden-layer widths, the optimizer, the batch size, and the method of selecting the final architecture: grid search for Laplace and manual tuning for Gaussian.

Table~\ref{tab:synth-metrics} confirms that the proposed method reconstructs the autoregressive coefficient very accurately in both cases, with $\mathrm{MSE}(\phi)$ of order $10^{-3}$. In the Gaussian case, the dominant contribution to the total reconstruction error comes from the variance trajectory $\sigma^2(t)$, whereas in the Laplace case the scale parameter $b(t)$ is recovered much more accurately. As a consequence, the overall average reconstruction error is smaller in the Laplace experiment.
\subsection{Numerical experiments on real data}

We next consider real data given by electricity spot prices; see Figure~\ref{fig:used-dataset}. The data are taken from the hourly electricity spot-price dataset for Denmark and neighboring countries available at \url{https://www.kaggle.com/datasets/arashnic/electricity-spot-price}. To obtain a univariate daily time series, we fix one price series and retain one observation per day at the same hour throughout the experiment. The resulting observations are ordered chronologically, and the same preprocessing rule is used for all four fitted models. In contrast to the synthetic setting, the true parameter functions are not known; therefore, the numerical study focuses on the interpretability of the recovered parameter trajectories, the quality of trajectories generated by the fitted models, and the resulting illustrative forecasting performance for the selected train-test splits.

We report results for two training-set sizes: 81 and 995 observations. For each sample size, we estimate the time-dependent intercept \(c(t)\), autoregressive coefficient \(\phi(t)\), and the corresponding scale parameter, namely \(\sigma^2(t)\) in the Gaussian case and \(b(t)\) in the Laplace case. 
The fitted networks estimate $\Theta$ over a wider horizon; however, only the estimates $\widehat{\Theta}(N+1)$ and $\widehat{\Theta}(N+2)$ are used for the two-step-ahead recursive forecasting procedure.
In order to make the comparison between the two noise specifications transparent, the network architectures were chosen manually and fixed within each sample-size regime based on preliminary experiments. The resulting architectures are summarized in Table~\ref{tab:real-arch}.

Figure~\ref{fig:real-data-windows} shows the two data windows used in the real-data experiments. Figures~\ref{fig:real-81-params} and~\ref{fig:real-995-params} display the estimated time-dependent parameter trajectories for the two training-set sizes. In both cases, the estimated coefficients vary substantially over time, which supports the use of a nonstationary autoregressive specification.

\begin{figure}[H]
    \centering
    \includegraphics[width=0.75\linewidth]{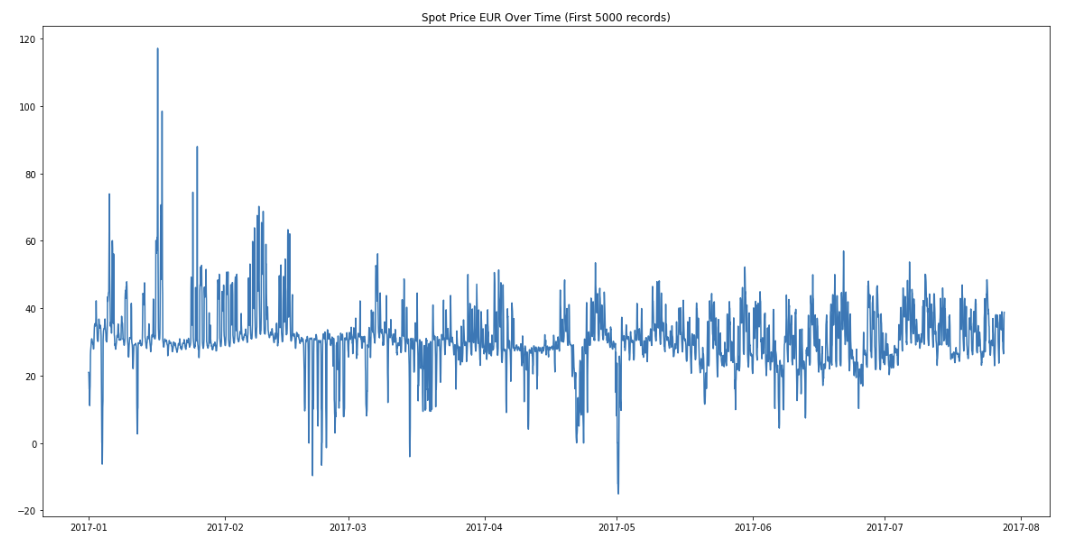}
    \caption{First $5000$ records from the dataset used for numerical experiments on real data: 
    energy spot prices in Denmark and neighboring countries (source: \url{https://www.kaggle.com/datasets/arashnic/electricity-spot-price}).}
    \label{fig:used-dataset}
\end{figure}

\begin{figure}[H]
    \centering
    \includegraphics[width=0.75\linewidth]{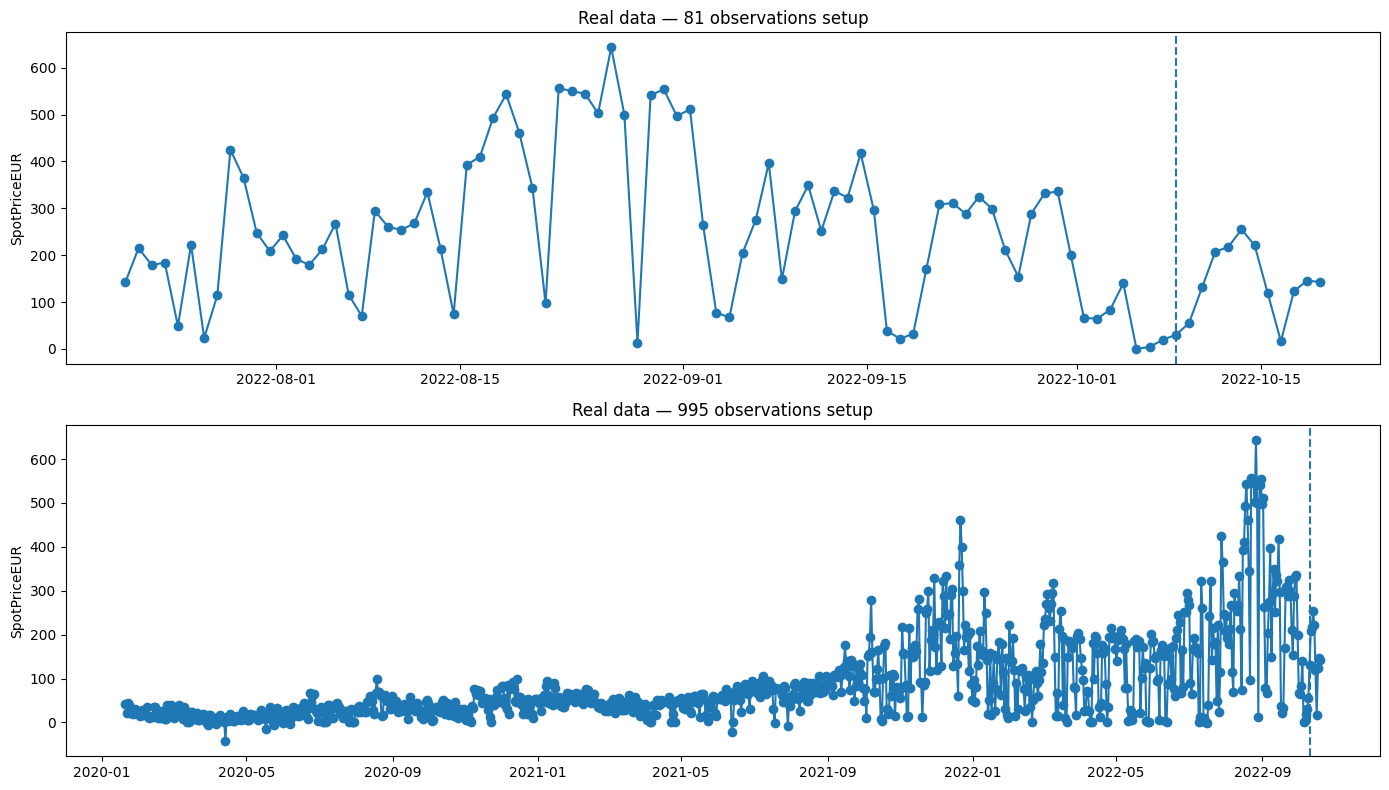}
    \caption{Data windows used in the numerical experiments on real data: the upper panel corresponds to the training window of length 81, while the lower panel corresponds to the training window of length 995.}
    \label{fig:real-data-windows}
\end{figure}

\begin{table}[H]
\centering
\begin{tabular}{lcccc}
\hline
Case & Hidden layers & Activations & Optimizer & Epochs \\
\hline
Gaussian, 81  & $(20,50,20)$ & swish / softplus / GELU & RMSprop & 1000 \\
Laplace, 81   & $(20,50,20)$ & swish / softplus / GELU & RMSprop & 1000 \\
Gaussian, 995 & $(25,50,25)$ & swish / softplus / GELU & RMSprop & 1000 \\
Laplace, 995  & $(25,50,25)$ & swish / softplus / GELU & RMSprop & 1000 \\
\hline
\end{tabular}
\caption{Network architectures used in the real-data experiments. The architectures were selected manually and then kept fixed within each sample-size setting.}
\label{tab:real-arch}
\end{table}

\begin{figure}[H]
    \centering
    \subfloat[Gaussian noise -- $c(t)$]{
        \includegraphics[width=0.32\linewidth]{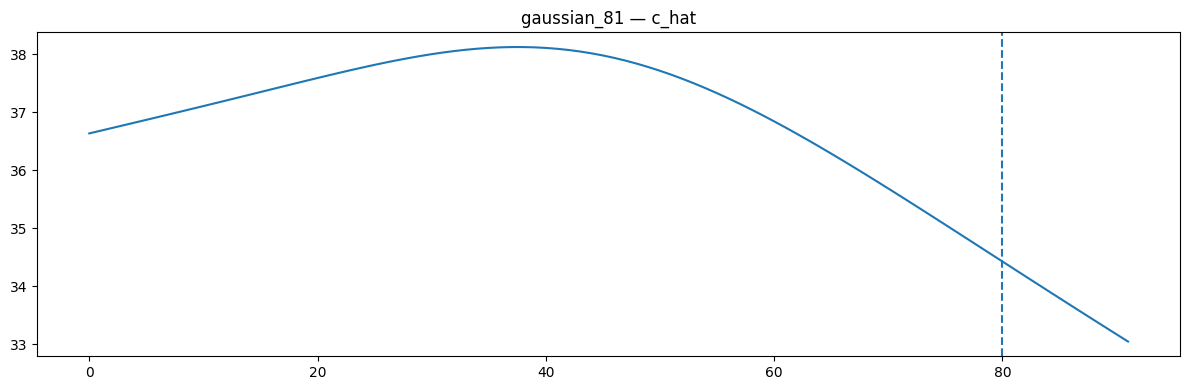}
    }
    \subfloat[Gaussian noise -- $\phi(t)$]{
        \includegraphics[width=0.32\linewidth]{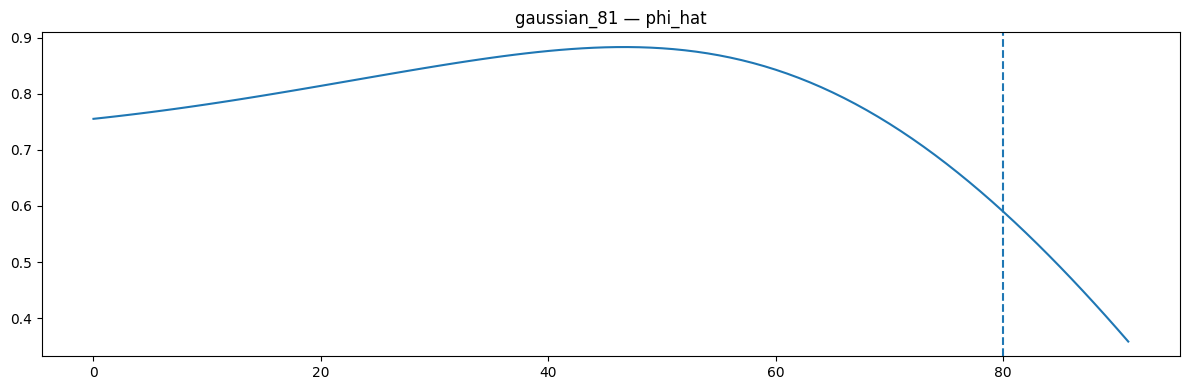}
    }
    \subfloat[Gaussian noise -- $\sigma^2(t)$]{
        \includegraphics[width=0.32\linewidth]{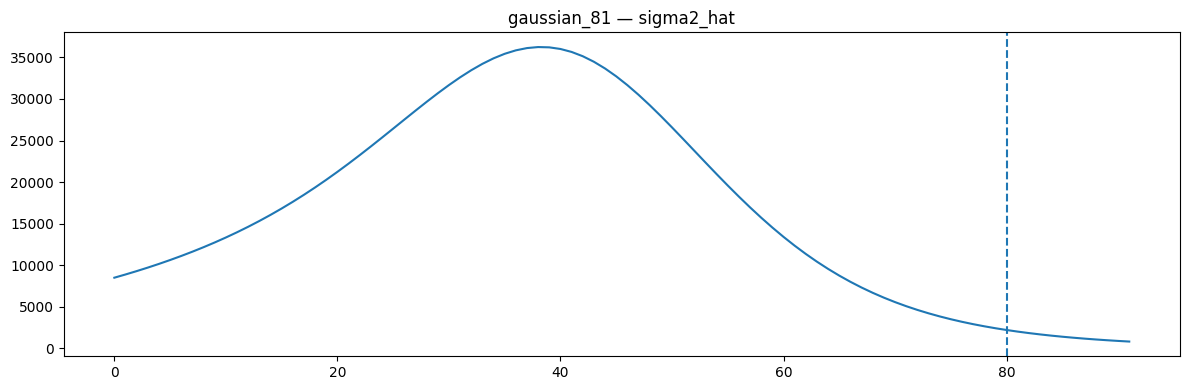}
    }

    \subfloat[Laplace noise -- $c(t)$]{
        \includegraphics[width=0.32\linewidth]{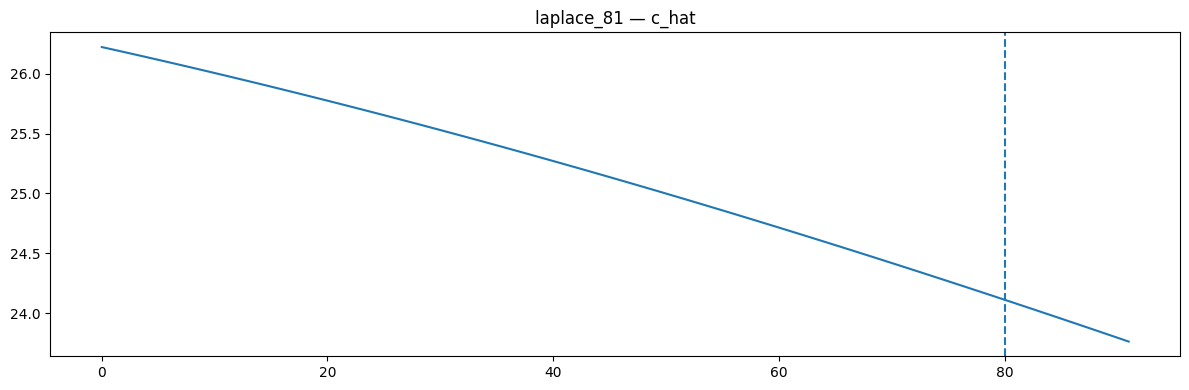}
    }
    \subfloat[Laplace noise -- $\phi(t)$]{
        \includegraphics[width=0.32\linewidth]{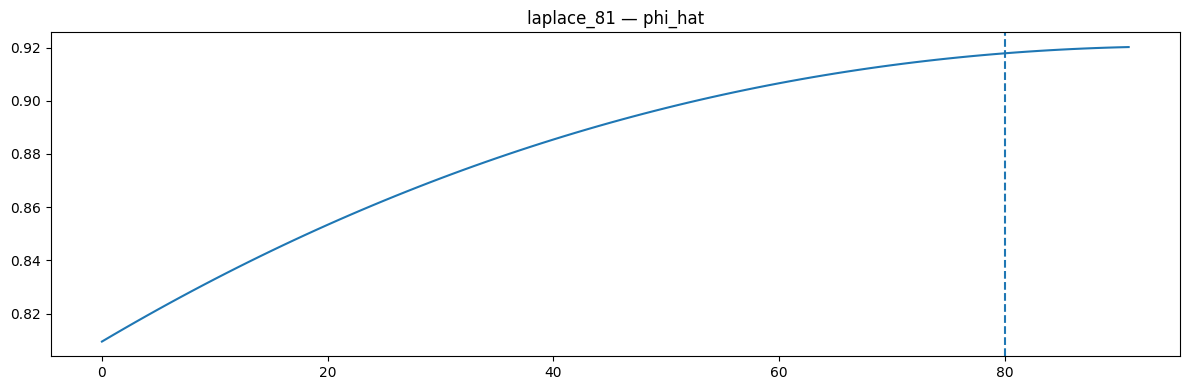}
    }
    \subfloat[Laplace noise -- $b(t)$]{
        \includegraphics[width=0.32\linewidth]{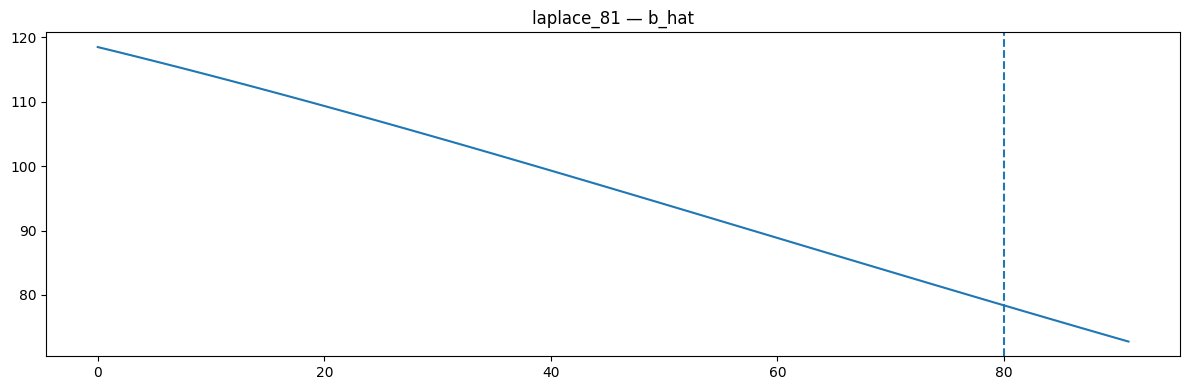}
    }
    \caption{Estimated time-dependent parameters for the real-data experiment with training set consisting of 81 observations.}
    \label{fig:real-81-params}
\end{figure}

\begin{figure}[H]
    \centering
    \subfloat[Gaussian noise -- $c(t)$]{
        \includegraphics[width=0.32\linewidth]{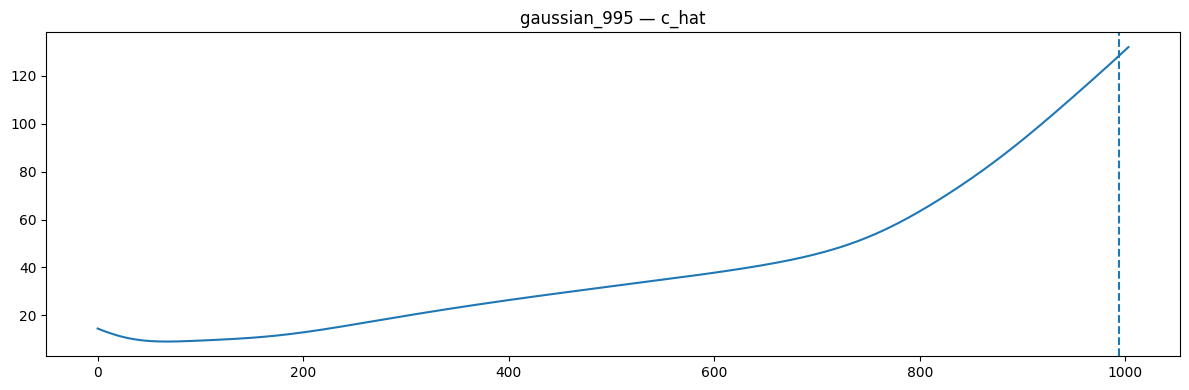}
    }
    \subfloat[Gaussian noise -- $\phi(t)$]{
        \includegraphics[width=0.32\linewidth]{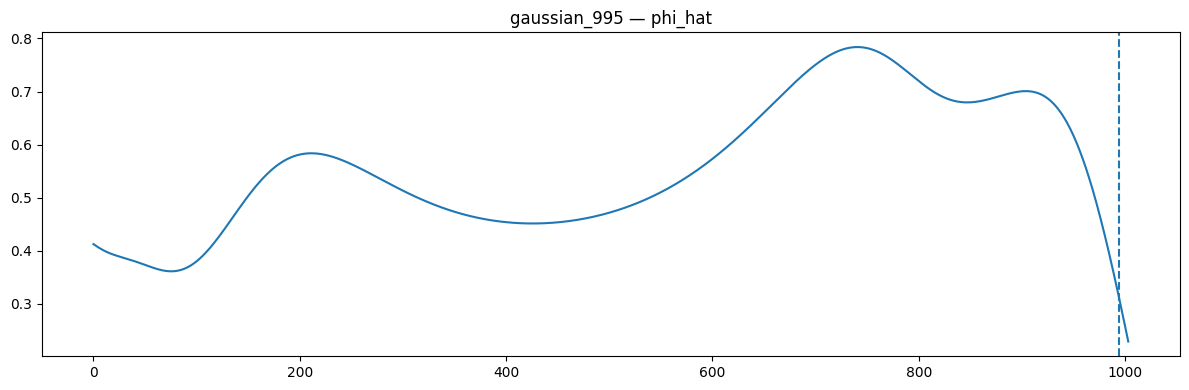}
    }
    \subfloat[Gaussian noise -- $\sigma^2(t)$]{
        \includegraphics[width=0.32\linewidth]{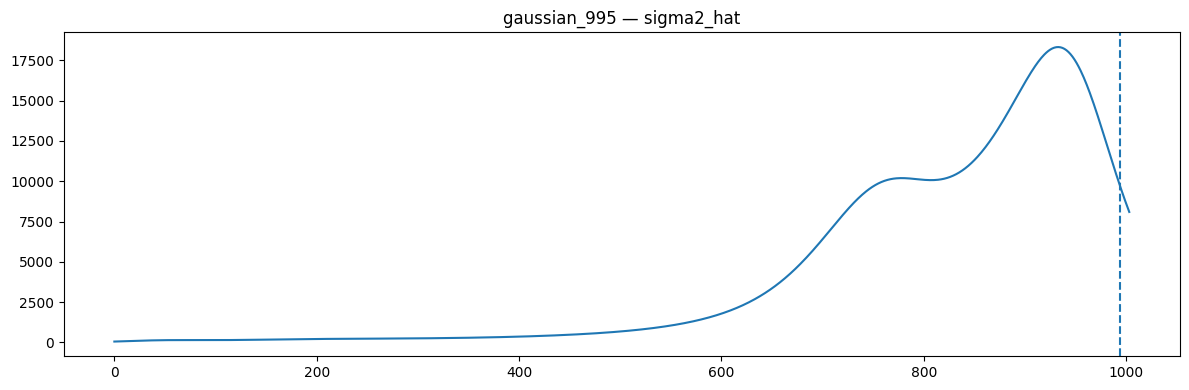}
    }

    \subfloat[Laplace noise -- $c(t)$]{
        \includegraphics[width=0.32\linewidth]{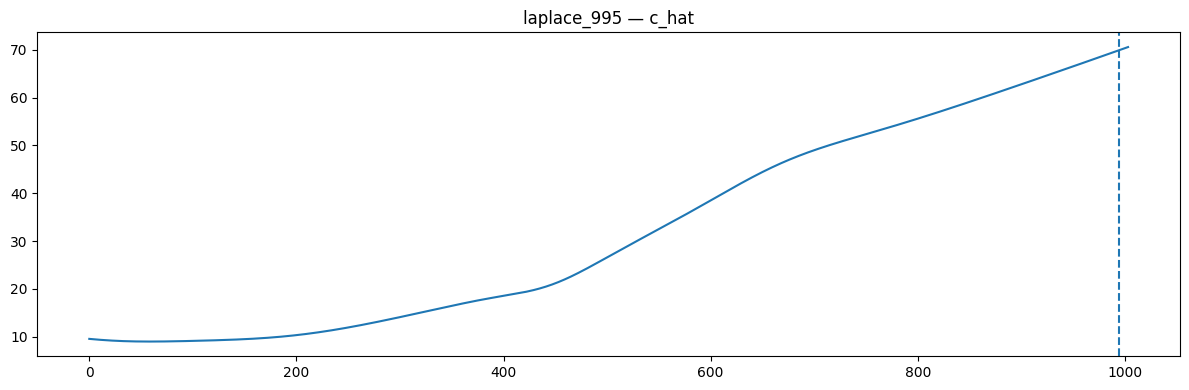}
    }
    \subfloat[Laplace noise -- $\phi(t)$]{
        \includegraphics[width=0.32\linewidth]{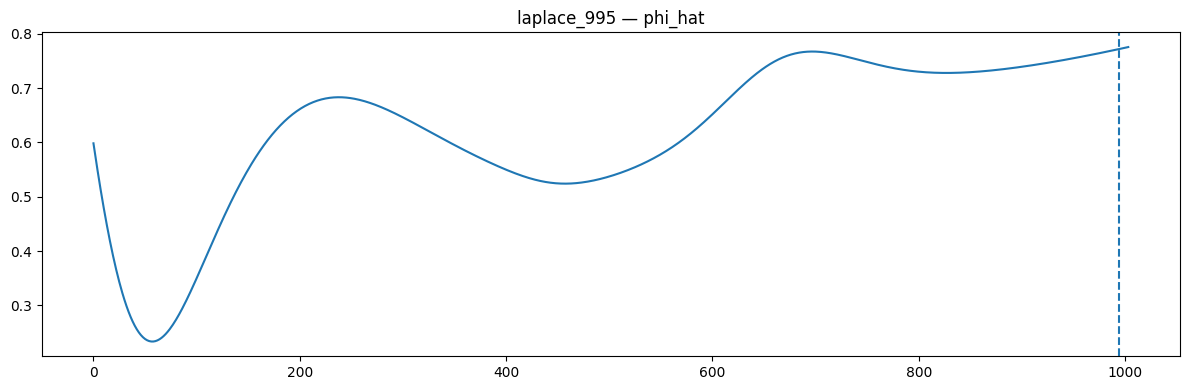}
    }
    \subfloat[Laplace noise -- $b(t)$]{
        \includegraphics[width=0.32\linewidth]{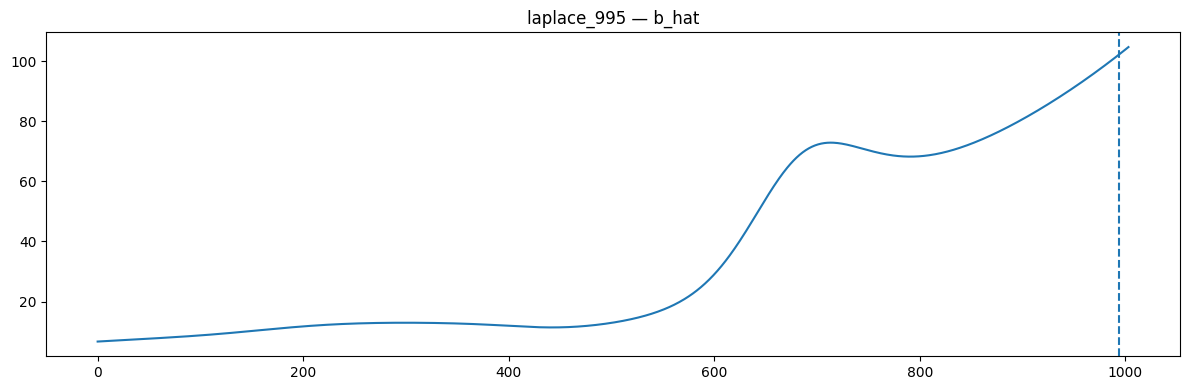}
    }
    \caption{Estimated time-dependent parameters for the real-data experiment with training set consisting of 995 observations.}
    \label{fig:real-995-params}
\end{figure}

To assess whether the fitted models reproduce the global behavior of the observed series, we additionally compare the observed trajectory with a trajectory simulated from the estimated time-dependent parameters. Since the longer training window provides a more informative setting, this comparison is reported only for the case of 995 observations; see Figure~\ref{fig:real-995-fit}.

\begin{figure}[H]
    \centering
    \subfloat[Gaussian model -- actual vs simulated]{
        \includegraphics[width=0.8\linewidth]{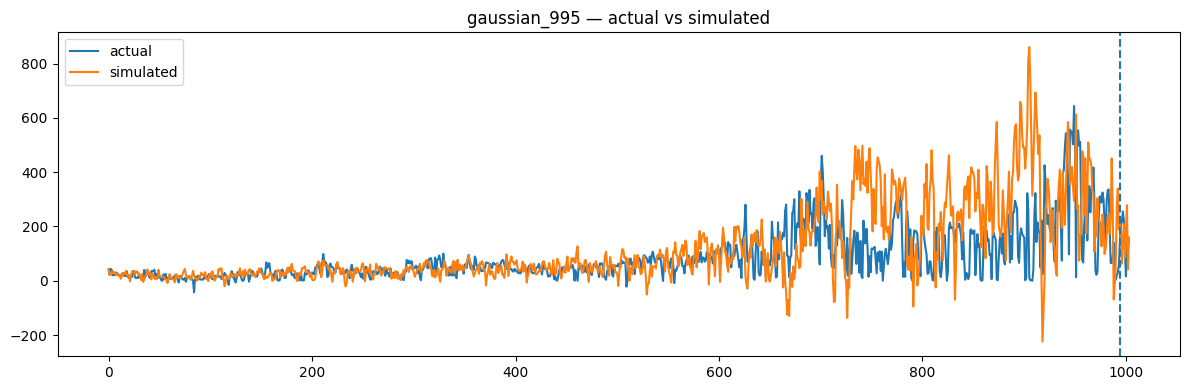}
    }

    \subfloat[Laplace model -- actual vs simulated]{
        \includegraphics[width=0.8\linewidth]{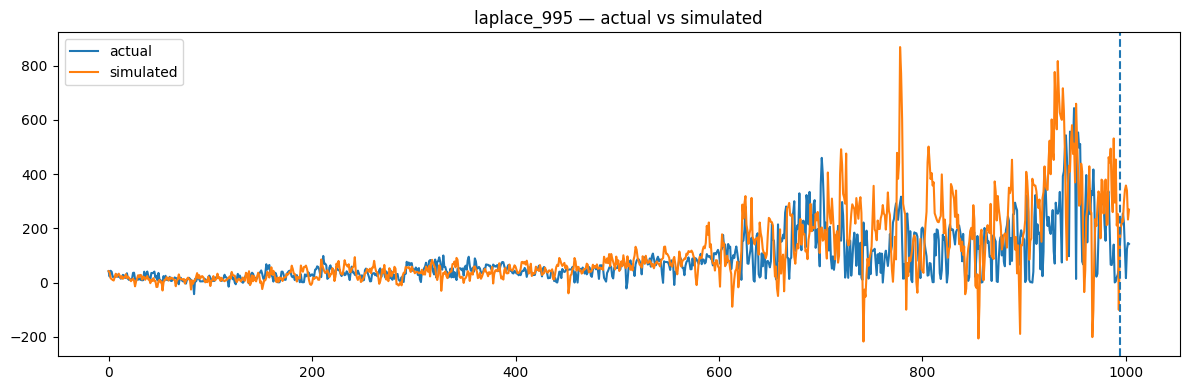}
    }
    \caption{Observed and simulated trajectories
    for the real-data experiment with training set consisting of 995 observations.}
    \label{fig:real-995-fit}
\end{figure}

Finally, Table~\ref{tab:real-forecast-errors} reports the absolute and relative forecast errors for one-step-ahead and two-step-ahead predictions. The relative error for horizon $h\in\{1,2\}$ is computed as
\[
\mathrm{rel.err.}_{N+h}=\frac{|y_{N+h}-\bar y_{N+h}|}{|y_{N+h}| + 10^{-12}} .
\]

\begin{table}[H]
\centering
\begin{tabular}{lccccc}
\hline
Case & Noise & 1-step abs. & 1-step rel. & 2-step abs. & 2-step rel. \\
\hline
81 observations  & Gaussian & 3.35   & 0.0612 & 68.62  & 0.5232 \\
81 observations  & Laplace  & 3.18   & 0.0581 & 59.75  & 0.4555 \\
995 observations & Gaussian & 38.83  & 0.1874 & 39.03  & 0.1794 \\
995 observations & Laplace  & 35.96  & 0.1735 & 15.16  & 0.0697 \\
\hline
\end{tabular}
\caption{Absolute and relative forecasting errors for one-step-ahead and two-step-ahead predictions in the real-data experiments.}
\label{tab:real-forecast-errors}
\end{table}




For the particular train--test splits considered here, the results suggest that the choice of the training-window length has a more pronounced effect on forecast accuracy than the assumed noise distribution. The comparison between Gaussian and Laplace noise should therefore be interpreted as illustrative and specific to these splits. A comprehensive forecasting evaluation would require a rolling or expanding-window backtest, together with comparisons against standard baseline models.

\FloatBarrier

\section{Key implementation details for the numerical experiments}

In this section, we reproduce the main implementation components used in the numerical experiments. The code follows the likelihood-based formulation introduced in Section~\ref{Section: model and loss} and is kept consistent with the publicly available notebooks. The losses below are written for ordered contiguous observations: when evaluated on the full ordered series, they coincide with the conditional Gaussian or Laplace negative log-likelihoods, up to constants. When smaller ordered mini-batches are used, the same formulas are optimized on contiguous sub-blocks and should be understood as a mini-batch approximation to the full conditional likelihood.
In particular, the temporal ordering of the observations is preserved
throughout the optimization procedure.

\subsection{Gaussian loss}

In the Gaussian case, the network outputs the triplet
\[
\Bigl(\hat c(t),\,\hat\phi(t),\,\ln\left(\hat\sigma^2(t)\right)\Bigr),
\]
and the loss is given by the Gaussian negative log-likelihood, up to an additive constant. The following implementation was used:
\begin{lstlisting}[language=Python, caption={Gaussian loss used in the synthetic and real-data experiments.}]
def gaussian_loss3(y_true, y_pred):
    N = tf.shape(y_true)[0] - 1
    y_true_1 = tf.slice(y_true, [1, 0], [N, 1])
    y_true_2 = tf.slice(y_true, [0, 0], [N, 1])

    c = tf.slice(y_pred, [1, 0], [N, 1])
    phi = tf.slice(y_pred, [1, 1], [N, 1])
    ln_sigma_squared = tf.slice(y_pred, [1, 2], [N, 1])
    sigma_squared = tf.exp(ln_sigma_squared)

    a = tf.square(y_true_1 - c - phi * y_true_2)
    b = a / sigma_squared

    loss = tf.reduce_sum(ln_sigma_squared + b, axis=0)
    return loss / 2.0
\end{lstlisting}

For a fully ordered batch, this is precisely the conditional Gaussian likelihood for the TVAR(1) model, written in terms of the network outputs. For smaller ordered mini-batches, it provides the corresponding blockwise approximation.

\subsection{Laplace loss}

In the Laplace case, the network outputs the triplet
\[
\Bigl(\hat c(t),\,\hat\phi(t),\,\ln\left(\hat b(t)\right)\Bigr),
\]
and the loss corresponds to the Laplace negative log-likelihood:
\begin{lstlisting}[language=Python, caption={Laplace loss used in the 
real-data experiments.}, label={lst:laplace-loss-real}]
def laplace_loss3(y_true, y_pred):
    N = tf.shape(y_true)[0] - 1
    y_true_1 = tf.slice(y_true, [1, 0], [N, 1])
    y_true_2 = tf.slice(y_true, [0, 0], [N, 1])

    c = tf.slice(y_pred, [1, 0], [N, 1])
    phi = tf.slice(y_pred, [1, 1], [N, 1])
    ln_b = tf.slice(y_pred, [1, 2], [N, 1])
    b = tf.exp(ln_b)

    a1 = tf.abs(y_true_1 - c - phi * y_true_2)
    a2 = a1 / b

    loss = tf.reduce_sum(ln_b + a2, axis=0)
    return loss
\end{lstlisting}

Hence, the two noise specifications are treated analogously, with the only difference being the assumed conditional distribution of the innovation term.

For experiments on real-world data, the TensorFlow implementation shown
in Listing \ref{lst:laplace-loss-real} is used directly. 
For synthetic-data experiments, including grid search with multistart initialization, we instead use the following PyTorch implementation of the same Laplace negative log-likelihood, with additional numerical stabilization and an optional smoothness regularization term:
\begin{lstlisting}[language=Python, caption={Laplace loss used in the 
synthetic-data experiments.}, label={lst:laplace-loss-synthetic}]
def laplace_ar1_loss(y_true, y_pred, *, start_idx=1, lam=0.0, eps=1e-12):
    y_prev = y_true[start_idx - 1 : -1, 0]
    y_curr = y_true[start_idx:, 0]
    
    c = y_pred[start_idx:, 0]
    phi = y_pred[start_idx:, 1]
    log_b = y_pred[start_idx:, 2]
    b = torch.exp(log_b).clamp_min(eps)
    
    resid = y_curr - c - phi * y_prev
    base = torch.sum(torch.log(b) + torch.abs(resid) / b)
    
    if lam > 0.0:
        diffs = y_pred[start_idx:, :] - y_pred[start_idx - 1 : -1, :]
        base = base + lam * torch.mean(diffs**2)
    return base

\end{lstlisting}

\subsection{Network architecture}

The parameter functions are approximated by a feedforward neural network with three outputs corresponding to the time-dependent intercept, autoregressive coefficient, and scale parameter. A representative implementation is given below:
\begin{lstlisting}[language=Python, caption={Feedforward neural network architecture for parameterizing the time-dependent coefficients.}]
def build_model(hidden_units, activations):
    model = Sequential()
    model.add(Input(shape=(1,)))
    for h, act in zip(hidden_units, activations):
        model.add(Dense(h, activation=act))
    model.add(Dense(3, activation='linear'))
    return model
\end{lstlisting}

The final hidden-layer widths and activations were selected separately for each experiment, either by grid search or manual tuning, as described
above.

\subsection{Recursive forecasting}

For the real-data experiments, the fitted parameter trajectories are used directly in recursive forecasting. In the Gaussian case, both the forecast mean and the forecast variance are propagated recursively:
\begin{lstlisting}[language=Python, caption={Recursive Gaussian forecast used in the real-data experiments.}]
def gaussian_forecast_recursive(fitted_df, start, horizon, alpha):
    rows = []
    y_prev = float(fitted_df.iloc[start - 1]["y"])
    mean_prev = None
    var_prev = None
    q = float(norm.ppf((1.0 + alpha) / 2.0))

    for step in range(1, horizon + 1):
        idx = start + step - 1
        c = float(fitted_df.iloc[idx]["c_hat"])
        phi = float(fitted_df.iloc[idx]["phi_hat"])
        sigma2 = float(fitted_df.iloc[idx]["sigma2_hat"])

        if step == 1:
            mean = c + phi * y_prev
            var = sigma2
        else:
            mean = c + phi * float(mean_prev)
            var = phi * phi * float(var_prev) + sigma2

        rows.append({
            "step": step,
            "forecast_mean": mean,
            "interval_lower": mean - q * math.sqrt(var),
            "interval_upper": mean + q * math.sqrt(var),
            "actual_y": float(fitted_df.iloc[idx]["y"]),
        })
        mean_prev, var_prev = mean, var
    return pd.DataFrame(rows)
\end{lstlisting}

In the Laplace case, the forecast mean is propagated recursively as well, whereas the prediction interval is constructed using the one-step and two-step formulas derived in Section~3:
\begin{lstlisting}[language=Python, caption={Recursive Laplace forecast used in the real-data experiments.}]
def laplace_forecast_recursive(fitted_df, start, horizon, alpha):
    rows = []
    y_prev = float(fitted_df.iloc[start - 1]["y"])
    means = []

    for step in range(1, horizon + 1):
        idx = start + step - 1
        c = float(fitted_df.iloc[idx]["c_hat"])
        phi = float(fitted_df.iloc[idx]["phi_hat"])
        b = float(fitted_df.iloc[idx]["b_hat"])

        if step == 1:
            mean = c + phi * y_prev
            radius = -b * math.log(1.0 - alpha)
        else:
            mean = c + phi * means[-1]
            if step == 2:
                a = abs(phi) * float(fitted_df.iloc[start]["b_hat"])
                radius = laplace_quantile_two_step(a=a, c=b, alpha=alpha)
            else:
                radius = np.nan

        rows.append({
            "step": step,
            "forecast_mean": mean,
            "interval_lower": mean - radius if np.isfinite(radius) else np.nan,
            "interval_upper": mean + radius if np.isfinite(radius) else np.nan,
            "actual_y": float(fitted_df.iloc[idx]["y"]),
        })
        means.append(mean)
    return pd.DataFrame(rows)
\end{lstlisting}

\subsection{Learning schedule and ordered training}

In the real-data experiments, the constant learning rate was replaced with a decreasing learning rate, which led to improved results. Moreover, the optimization was performed with \texttt{shuffle=False}, so that every mini-batch contains contiguous observations in chronological order:
\begin{lstlisting}[language=Python, caption={Training protocol used in the final experiments.}]
def make_lr_schedule(C=1000, scale=1.0):
    def schedule(epoch):
        return scale / (epoch + C)
    return schedule

C = 1000 if noise == "laplace" else 100
callbacks = [LearningRateScheduler(make_lr_schedule(C=C, scale=1.0))]

batch_size = len(x_train) if noise == "gaussian" and data_key == "81" else 16

history = model.fit(
    x_train,
    y_train,
    batch_size=batch_size, 
    epochs=1000,
    verbose=0,
    shuffle=False,
    callbacks=callbacks
)
\end{lstlisting}
When \texttt{batch\_size=len(x\_train)}, the loss is the full ordered conditional likelihood. When \texttt{batch\_size=16}, the same loss is evaluated on ordered contiguous sub-blocks and therefore acts as a mini-batch approximation to the full-series likelihood; transitions crossing mini-batch boundaries are not included in a given mini-batch evaluation.
\FloatBarrier

\FloatBarrier

\section{Conclusion and future work}

We proposed a neural-network-based likelihood framework for estimating time-dependent parameters in autoregressive models. The approach combines the interpretability of a structured stochastic model with the flexibility of data-driven function approximation, allowing relatively simple AR models to capture complex nonstationary behavior.

The analysis covered both Gaussian and Laplace noise specifications, together with recursive prediction formulas for the TVAR(1) case. In particular, 
we derived explicit prediction intervals in the Gaussian setting and tractable conditional error distributions in the Laplace setting. For $k = 1$, the prediction intervals admit closed-form expressions, whereas for $k \ge 2$ they do not admit closed-form solutions and are obtained as the unique root of a one-dimensional nonlinear equation. 

The numerical results on synthetic and real data indicate that the proposed method is capable of recovering meaningful parameter trajectories and producing interpretable illustrative forecasts under different noise assumptions.

Possible directions for future work include considering more general noise families such as the Normal--Laplace distribution, carrying out a broader rolling-window forecasting evaluation with standard baseline models, and studying multi-step prediction for general TVAR($p$) models. Another natural direction is the investigation of theoretical properties of the estimation procedure and its extension to multivariate autoregressive models.

\appendix

\section{Auxiliary calculations for the two-step-ahead prediction interval in the Laplace case}\label{appendixA}

In this appendix, we derive the distribution of the two-step-ahead prediction error in the TVAR(1) model with Laplace noise.

Let
\[
e_{N+2}
=
\hat \Phi(N+2)\hat b(N+1)\xi_{N+1}
+
\hat b(N+2)\xi_{N+2},
\]
where
\[
\xi_{N+1},\xi_{N+2}\stackrel{\text{i.i.d.}}{\sim}\mathrm{Laplace}(0,1).
\]
Set
\[
a:=|\hat \Phi(N+2)|\hat b(N+1),
\qquad
c:=\hat b(N+2).
\]
Then, conditionally on the available information up to time \(N\),
\[
e_{N+2}\overset{d}=X_1+X_2,
\]
where
\[
X_1\sim \mathrm{Laplace}(0,a),
\qquad
X_2\sim \mathrm{Laplace}(0,c),
\]
and \(X_1,X_2\) are independent.

\subsection{Characteristic function and density}

For a random variable \(X\sim \mathrm{Laplace}(0,s)\), the characteristic function is
\[
\varphi_X(t)=\frac{1}{1+s^2 t^2}.
\]
Hence,
\[
\varphi_{e_{N+2}}(t)
=
\varphi_{X_1}(t)\varphi_{X_2}(t)
=
\frac{1}{(1+a^2 t^2)(1+c^2 t^2)}.
\]

Assume first that \(a\neq c\). We use the decomposition
\[
\frac{1}{(1+a^2 t^2)(1+c^2 t^2)}
=
\frac{A}{1+a^2 t^2}
+
\frac{B}{1+c^2 t^2}.
\]
Matching coefficients gives
\[
A=\frac{a^2}{a^2-c^2},
\qquad
B=-\frac{c^2}{a^2-c^2}.
\]
Therefore,
\[
\varphi_{e_{N+2}}(t)
=
\frac{a^2}{a^2-c^2}\cdot \frac{1}{1+a^2 t^2}
-
\frac{c^2}{a^2-c^2}\cdot \frac{1}{1+c^2 t^2}.
\]

Using the inverse Fourier transform and the identity
\[
\mathcal{F}^{-1}\!\left(\frac{1}{1+s^2 t^2}\right)(x)
=
\frac{1}{2s}e^{-|x|/s},
\]
we obtain
\[
f_{e_{N+2}}(x)
=
\frac{a}{2(a^2-c^2)}e^{-|x|/a}
-
\frac{c}{2(a^2-c^2)}e^{-|x|/c},
\qquad x\in\mathbb R.
\]
Equivalently,
\[
f_{e_{N+2}}(x)
=
\frac{a\,e^{-|x|/a}-c\,e^{-|x|/c}}{2(a^2-c^2)}.
\]

\subsection{\texorpdfstring{The case $a=c$}{The case a=c}}

If \texorpdfstring{$a=c$}{a=c}, we take the limit in the previous expression:
\[
f_{e_{N+2}}(x)
=
\lim_{u\to a}
\frac{u\,e^{-|x|/u}-a\,e^{-|x|/a}}{2(u^2-a^2)}. \]
Writing
\[
g(u):=u\,e^{-|x|/u},
\]
we have
\[
g'(u)
=
e^{-|x|/u}\left(1+\frac{|x|}{u}\right).
\]
Hence,
\[
f_{e_{N+2}}(x)
=
\frac{1}{4a}e^{-|x|/a}\left(1+\frac{|x|}{a}\right),
\qquad x\in\mathbb R.
\]

\subsection{Distribution function and quantile equation}

Assume first that \(a\neq c\). Since the density is symmetric, for \(t\ge 0\) we obtain
\[
F_{e_{N+2}}(t)
=
1-\frac{a^2}{2(a^2-c^2)}e^{-t/a}
+\frac{c^2}{2(a^2-c^2)}e^{-t/c},
\]
while for \(t<0\),
\[
F_{e_{N+2}}(t)
=
\frac{a^2}{2(a^2-c^2)}e^{t/a}
-
\frac{c^2}{2(a^2-c^2)}e^{t/c}.
\]

Thus, for \(q\ge 0\),
\[
\mathbb P(|e_{N+2}|\le q)
=
F_{e_{N+2}}(q)-F_{e_{N+2}}(-q)
=
1-\frac{a^2e^{-q/a}-c^2e^{-q/c}}{a^2-c^2}.
\]
Therefore, the radius \(q_\alpha^{(2)}\) of the two-step-ahead prediction interval is determined by
\[
a^2e^{-q_\alpha^{(2)}/a}
-
c^2e^{-q_\alpha^{(2)}/c}
=
(1-\alpha)(a^2-c^2).
\]

In the case \(a=c\), the density is
\[
f_{e_{N+2}}(x)=\frac{1}{4a}e^{-|x|/a}\left(1+\frac{|x|}{a}\right),
\]
which yields
\[
\mathbb P(|e_{N+2}|\le q)
=
1-e^{-q/a}\left(1+\frac{q}{2a}\right),
\qquad q\ge 0.
\]
Hence, \(q_\alpha^{(2)}\) satisfies
\[
e^{-q_\alpha^{(2)}/a}\left(1+\frac{q_\alpha^{(2)}}{2a}\right)=1-\alpha.
\]

\section{Auxiliary lemmas for multi-step prediction intervals under Laplace innovations}\label{appendixB}
\begin{lemma}
\label{lem:cf_laplace}
Let $\xi\sim\mathrm{Laplace}(0,1)$, i.e.
$f_{\xi}(x)=\tfrac12 e^{-|x|}$.
Then, for all $t\in\mathbb{R}$,
\begin{equation}
  \E\big[e^{it\xi}\big]=\frac{1}{1+t^2}.
  \label{eq:cf_laplace}
\end{equation}
More generally, if $s>0$ and $X:=s\xi\sim\mathrm{Laplace}(0,s)$, then
$\E[e^{itX}]=1/(1+s^2 t^2)$.
\end{lemma}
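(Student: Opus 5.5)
The plan is to compute the characteristic function directly from the Laplace density, splitting the integral at the origin. Writing
\[
\E\big[e^{it\xi}\big]=\frac12\int_{-\infty}^{0}e^{itx}e^{x}\,\mathrm{d}x+\frac12\int_{0}^{\infty}e^{itx}e^{-x}\,\mathrm{d}x,
\]
both integrals converge absolutely, since $|e^{itx}e^{\mp x}|=e^{-|x|}$ is integrable. Each is an elementary complex exponential integral. The first equals $1/(1+it)$, because $e^{(1+it)x}\to0$ as $x\to-\infty$. The second equals $1/(1-it)$, because $e^{(-1+it)x}\to 0$ as $x\to\infty$. Adding them gives
\[
\frac12\Big(\frac{1}{1+it}+\frac{1}{1-it}\Big)=\frac12\cdot\frac{2}{1+t^2}=\frac{1}{1+t^2},
\]
which is \eqref{eq:cf_laplace}. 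An alternative that avoids complex antiderivatives uses the symmetry of $f_\xi$. The sine part vanishes, so $\E[e^{it\xi}]=\int_0^\infty\cos(tx)e^{-x}\,\mathrm{d}x$. This integral can then be evaluated by integrating by parts twice, which yields the same value $1/(1+t^2)$.

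For the scaled statement, I will use the scaling property recalled in the Remark: $X:=s\xi\sim\mathrm{Laplace}(0,s)$ for $s>0$. Then
\[
\E[e^{itX}]=\E[e^{i(st)\xi}]=\frac{1}{1+s^2t^2},
\]
by the first part evaluated at $st$. The same identity also covers a negative coefficient $a_{j,k}$, as used in the proof of Proposition~\ref{prop:laplace_kstep}. Indeed, $\xi$ is symmetric, so $a\xi\overset{d}=|a|\xi$, and the formula depends only on $a^2$.

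There is no real obstacle here. The only point needing care is to justify that the boundary terms $e^{(\pm1+it)x}$ vanish at $\mp\infty$, which holds because their modulus is $e^{\pm x}$, and to confirm absolute integrability so that the integral may be split.
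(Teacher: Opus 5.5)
Your proof is correct, but it takes a different route from the paper. The paper gives no computation at all: it cites the characteristic function of the classical Laplace law from Kotz, Kozubowski and Podg\'orski (Section~2.1.2, Eq.~(2.1.8)).

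Your direct evaluation splits the integral at the origin. It justifies the split by absolute integrability of $e^{-|x|}$ and kills the boundary terms via $|e^{(\pm1+it)x}|=e^{\pm x}$. This makes the lemma self-contained for a few extra lines. It also spares the reader from checking that the reference's parametrization matches the one used here (density $\frac{1}{2s}e^{-|x|/s}$). Your cosine/integration-by-parts alternative is also valid.

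Your computation gives slightly more than the lemma as stated. Since $\E[e^{it(a\xi)}]=\E[e^{i(at)\xi}]$, evaluating the first part at $at$ yields $1/(1+a^2t^2)$ for every $a\in\mathbb{R}$, including $a<0$ and $a=0$. This is exactly what item~2 of Proposition~\ref{prop:laplace_kstep} needs. The paper instead invokes the lemma ``with $s=|a_{j,k}|$'', which tacitly relies on the symmetry of $\xi$ and on $s>0$. It therefore formally misses the case $a_{j,k}=0$, which occurs when some $\widehat\Phi(N+m)$ vanishes. Your symmetry remark about negative coefficients is correct but, by the same observation, not even needed.
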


\begin{proof}
The characteristic function of the Laplace distribution is given in Kotz et al. \cite[Section~2.1.2, Eq.~(2.1.8)]{KotzLaplace}.
\end{proof}
\begin{lemma}
\label{lem:partial_fraction}
Let $s_1,\dots,s_k>0$ be pairwise distinct. Define the rational function
\begin{equation*}
  R(u):=\prod_{j=1}^k \frac{1}{1+s_j^2 u},\qquad u\in\mathbb{C}\setminus\big\{-1/s_1^2,\dots,-1/s_k^2\big\}.
  \label{eq:R_u_def}
\end{equation*}
Then 
\begin{equation}
  R(u)=\sum_{j=1}^k \frac{A_j}{1+s_j^2 u},
  \label{eq:PF_decomp}
\end{equation}
where
\begin{equation}
  A_j=\prod_{\ell\in\{1,\dots,k\}\setminus\{j\}}\frac{s_j^2}{s_j^2-s_\ell^2},\qquad j\in\{1, \dots, k\},
  \label{eq:Aj_explicit}
\end{equation}
and $\sum_{j=1}^k A_j=1$.
\end{lemma}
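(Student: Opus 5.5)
The statement to prove is Lemma~\ref{lem:partial_fraction}. The plan is to prove the partial fraction identity by the standard residue (cover-up) argument, using that the poles are simple because the $s_j$ are pairwise distinct. Then deduce $\sum_j A_j=1$ by evaluating at $u=0$.

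\emph{Step 1: existence of a decomposition.} Put $P(u):=\prod_{j=1}^k(1+s_j^2u)$. This is a polynomial of degree $k$ whose roots $-1/s_j^2$ are pairwise distinct, since the $s_j>0$ are distinct and so the $s_j^2$ are distinct. $R=1/P$ is a proper rational function (numerator degree $0<k$) with only simple poles. The partial fraction theorem therefore gives constants $A_j$ such that $R(u)=\sum_j A_j/(1+s_j^2u)$.

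To avoid quoting that theorem, I will instead define $A_j$ by \eqref{eq:Aj_explicit} and set
\[
Q(u):=\sum_{j=1}^k A_j\prod_{\ell\ne j}(1+s_\ell^2u).
\]
This $Q$ is a polynomial of degree at most $k-1$. I will check that $Q(-1/s_m^2)=1$ for each $m$. Every term with $j\ne m$ contains the factor $(1+s_m^2u)$, which vanishes at $u=-1/s_m^2$, so only the $j=m$ term survives. Its value is
\[
A_m\prod_{\ell\ne m}\Bigl(1-\frac{s_\ell^2}{s_m^2}\Bigr)=A_m\prod_{\ell\ne m}\frac{s_m^2-s_\ell^2}{s_m^2}=1,
\]
by the definition of $A_m$. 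Thus $Q-1$ has degree at most $k-1$ and vanishes at $k$ distinct points, so $Q\equiv1$. Dividing by $P$ gives \eqref{eq:PF_decomp} on the stated domain.

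\emph{Step 2: the normalisation.} Evaluate \eqref{eq:PF_decomp} at $u=0$, which is allowed since $0$ is not a pole. This gives $R(0)=1=\sum_jA_j$.

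The only point needing care is the interpolation step, namely the degree count of $Q$ together with the distinctness of the nodes $-1/s_m^2$. This is exactly where the pairwise-distinct hypothesis enters. I do not expect a genuine obstacle. For $k=1$ the empty product gives $A_1=1$ and the claim is trivial.
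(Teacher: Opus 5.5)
Your argument is correct, but it runs in the opposite logical direction from the paper's.

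The paper first invokes the general partial fraction theorem, together with a linear-independence remark for uniqueness, to obtain \emph{some} decomposition $R(u)=\sum_j A_j/(1+s_j^2u)$. It then identifies the coefficients by the cover-up limit $\lim_{u\to -1/s_j^2}(1+s_j^2u)R(u)$. In other words, it shows the $A_j$ are forced once a decomposition is known to exist.

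You instead take the explicit $A_j$ as a definition and verify the identity directly. After clearing denominators it becomes the polynomial identity $Q\equiv 1$. You check this at the $k$ distinct nodes $-1/s_m^2$ and conclude with a degree count; this is Lagrange interpolation of the constant $1$.

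The core computation is the same in both: evaluate at a pole so that every term but one vanishes. Your version is fully self-contained, needing neither the existence theorem for partial fractions nor the uniqueness remark. The paper's version is shorter given that theorem, and it makes clear why the coefficients must take this form. The normalization $\sum_j A_j=1$ via $u=0$ is identical in both.
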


\begin{proof}
Since the $s_j$ are pairwise distinct, the numbers $-1/s_j^2$ are pairwise distinct and hence
$R(u)$ has only simple poles at $u=-1/s_j^2$.
Therefore $R(u)$ admits a partial fraction decomposition with simple terms of the form
$A_j/(1+s_j^2 u)$.
Uniqueness follows because the functions $u\mapsto 1/(1+s_j^2 u)$ have distinct poles, hence they are linearly independent over $\mathbb{C}$.

\medskip
Multiply both sides of \eqref{eq:PF_decomp} by $(1+s_j^2 u)$ and then evaluate at the pole location $u=-1/s_j^2$.
On the left-hand side,
\begin{align*}
  \lim_{u\to -1/s_j^2} (1+s_j^2 u)R(u)
  &= \lim_{u\to -1/s_j^2}
     \prod_{\ell\ne j}\frac{1}{1+s_\ell^2 u}
  = \prod_{\ell\ne j}\frac{1}{1-s_\ell^2/s_j^2}
  = \prod_{\ell\ne j}\frac{s_j^2}{s_j^2-s_\ell^2}.
\end{align*}
On the right-hand side of \eqref{eq:PF_decomp}, we obtain
\begin{align*}
  \lim_{u\to -1/s_j^2} (1+s_j^2 u)\sum_{m=1}^k \frac{A_m}{1+s_m^2 u}
  &= \lim_{u\to -1/s_j^2}
     \Bigg(A_j + \sum_{m\ne j} A_m\,\frac{1+s_j^2 u}{1+s_m^2 u}\Bigg)
  = A_j,
\end{align*}
because for $m\ne j$ the factor $(1+s_j^2 u)$ vanishes at $u=-1/s_j^2$ while the denominator remains nonzero.
Equating the two limits yields \eqref{eq:Aj_explicit}.

\medskip
Set $u=0$ in \eqref{eq:PF_decomp}. Since $R(0)=1$, we obtain
$1=\sum_{j=1}^k A_j$.
\end{proof}
The following result is well-known.
\begin{lemma}[Inverse Fourier transform of $(1+s^2 t^2)^{-1}$]
\label{lem:inv_fourier}
Let $s>0$. Define
\begin{equation}
  g_s(x):=\frac{1}{2\pi}\int_{-\infty}^{\infty}\frac{e^{-itx}}{1+s^2 t^2}\,\mathrm{d}t,
  \qquad x\in\mathbb{R}.
  \label{eq:gs_def}
\end{equation}
Then
\begin{equation}
  g_s(x)=\frac{1}{2s}\exp\Big(-\frac{|x|}{s}\Big),\qquad x\in\mathbb{R}.
  \label{eq:gs_closed}
\end{equation}
In particular, $g_s$ is the density of $\mathrm{Laplace}(0,s)$.
\end{lemma}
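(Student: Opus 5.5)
The statement to prove is Lemma~\ref{lem:inv_fourier}. I would not compute the Fourier integral directly. Instead I would run the argument in the opposite direction: compute the forward Fourier transform of the candidate density $h_s(x):=\frac{1}{2s}e^{-|x|/s}$, then invoke the Fourier inversion theorem for integrable functions with integrable transforms.

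\textbf{Step 1: forward transform.} Split the integral at $0$. For $t\in\mathbb{R}$,
\[
\int_{-\infty}^{\infty} e^{itx}h_s(x)\,\mathrm{d}x
=\frac{1}{2s}\Big(\int_0^\infty e^{-x(1/s-it)}\,\mathrm{d}x+\int_0^\infty e^{-x(1/s+it)}\,\mathrm{d}x\Big)
=\frac{1}{2s}\Big(\frac{1}{1/s-it}+\frac{1}{1/s+it}\Big).
\]
Both integrals converge absolutely because $\operatorname{Re}(1/s\pm it)=1/s>0$. Simplifying gives $\frac{1}{2s}\cdot\frac{2/s}{1/s^2+t^2}=\frac{1}{1+s^2t^2}$. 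This is consistent with Lemma~\ref{lem:cf_laplace}, which could be cited instead of redoing the computation.

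\textbf{Step 2: inversion.} Both $h_s$ and $\varphi(t)=1/(1+s^2t^2)$ lie in $L^1(\mathbb{R})$, since $\int\varphi=\pi/s<\infty$. By the inversion theorem,
\[
h_s(x)=\frac{1}{2\pi}\int_{-\infty}^{\infty} e^{-itx}\varphi(t)\,\mathrm{d}t
\]
holds for almost every $x$. The right-hand side is exactly $g_s(x)$ from \eqref{eq:gs_def}, and it is continuous in $x$ by dominated convergence, with dominating function $|\varphi|$. Since $h_s$ is also continuous, the two functions agree everywhere, which is \eqref{eq:gs_closed}. The claim that $g_s$ is the density of $\mathrm{Laplace}(0,s)$ then holds by definition of that law.

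\textbf{Main obstacle.} The only delicate point is moving from almost-everywhere equality to pointwise equality, and the continuity argument above handles it.

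\textbf{Alternative route.} A fully self-contained option is contour integration. For $x>0$, close the contour in the lower half-plane, where the only pole is the simple pole at $t=-i/s$. For $x<0$, close in the upper half-plane, with the pole at $t=i/s$. Jordan's lemma kills the arc contribution, and the residue gives $\frac{1}{2s}e^{-|x|/s}$. The case $x=0$ is the elementary integral $\frac{1}{2\pi}\cdot\frac{\pi}{s}=\frac{1}{2s}$.
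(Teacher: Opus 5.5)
Your argument is correct: the forward computation gives $1/(1+s^2t^2)$, both $h_s$ and this transform are integrable, and your continuity argument (dominated convergence for $g_s$) correctly upgrades the almost-everywhere conclusion of the inversion theorem to equality at every $x$. The paper gives no proof and simply calls the result well-known, but its implicit justification is the same as yours: the forward transform is Lemma~\ref{lem:cf_laplace}, and the passage from an integrable characteristic function to a density via $\frac{1}{2\pi}\int e^{-itx}\varphi(t)\,\mathrm{d}t$ is the inversion step used in item~3 of Proposition~\ref{prop:laplace_kstep}; your contour-integration alternative is a valid, fully self-contained check.
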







\bibliographystyle{siam}
\bibliography{tvar_1}

\end{document}